\def\eqref#1{equation~\ref{#1}}
\def\1{\bm{1}}
\def\eps{{\epsilon}}
\DeclareMathAlphabet{\mathsfit}{\encodingdefault}{\sfdefault}{m}{sl}
\SetMathAlphabet{\mathsfit}{bold}{\encodingdefault}{\sfdefault}{bx}{n}
\newcommand{\E}{\mathbb{E}}
\newcommand{\R}{\mathbb{R}}
\DeclareMathOperator*{\argmin}{arg\,min}
\newcommand{\Matern}{Mat\'{e}rn }
\newtheorem{theorem}{Theorem}
\newtheorem{remark}[theorem]{Remark}
\DeclareMathOperator*{\minimize}{minimize}
\title{\Matern Kernels for Tunable Implicit Surface Reconstruction}
\author{Maximilian Weiherer \& Bernhard Egger \\
Department of Computer Science \\
Friedrich-Alexander-Universität Erlangen-Nürnberg \\
\texttt{\{maximilian.weiherer,bernhard.egger\}@fau.de}
}
\begin{document}

\maketitle

\begin{abstract}
We propose to use the family of \Matern kernels for implicit surface reconstruction, building upon the recent success of kernel methods for 3D reconstruction of oriented point clouds.
As we show from a theoretical and practical perspective, \Matern kernels have some appealing properties which make them particularly well suited for surface reconstruction---outperforming state-of-the-art methods based on the \textit{arc-cosine} kernel while being significantly easier to implement, faster to compute, and scalable.
Being stationary, we demonstrate that \Matern kernels allow for \textit{tunable} surface reconstruction in the same way as Fourier feature mappings help coordinate-based MLPs overcome spectral bias. 
Moreover, we theoretically analyze \Matern kernels' connection to \textsc{Siren} networks as well as their relation to previously employed arc-cosine kernels. 
Finally, based on recently introduced Neural Kernel Fields, we present \textit{data-dependent} \Matern kernels and conclude that especially the Laplace kernel (being part of the \Matern family) is extremely competitive, performing almost on par with state-of-the-art methods in the noise-free case while having a more than five times shorter training time.\footnote{Code available at: \url{https://github.com/mweiherer/matern-surface-reconstruction}.}
\end{abstract}

\section{Introduction}
Recovering the shape of objects from sparse or only partial observations is a challenging task.
Formally, let $\Omega\subset\R^d$ and $\mathcal{X}=\{x_1,x_2,\dots,x_m\}\subset\Omega$ be a set of $m$ points in $d$ dimensions which forms, together with associated per-point normals, a dataset $\mathcal{D}=\mathcal{X}\times\{n_1,n_2,\dots,n_m\}\subset\Omega\times\R^d$.
The goal of surface reconstruction is to recover the object's shape from $\mathcal{D}$.
The shape of objects may be represented as dense point clouds, polygonal meshes, manifold atlases, voxel grids, or (as the zero-level set of) implicit functions, which is the representation we chose to focus on in this work.
Specifically, if $f:\mathbb{R}^d\longrightarrow\mathbb{R}$ denotes an implicit function, such as a \textit{Signed Distance Function} (SDF), then, from a practical perspective, implicit surface reconstruction aims at finding an optimal solution $\hat{f}$ to the following \textit{Kernel Ridge Regression} (KRR) problem:
\begin{equation}
    \hat{f}=\argmin_{f\in H}\left\{\sum_{i=1}^m|f(x_i)|^2+\Vert\nabla f(x_i)-n_i \Vert^2+\lambda\Vert f\Vert_H^2\right\},
    \label{eq:emp_risk}
\end{equation}
where the function space $H:=H(\Omega)$ is usually taken to be a \textit{Reproducing Kernel Hilbert Space} (RKHS) with associated reproducing kernel $k:\Omega\times\Omega\longrightarrow\mathbb{R}$, and $\lambda>0$.
Once $\hat{f}$ at hand, the reconstructed object's surface is implicitly given as $\mathcal{\hat{S}}=\{x:\hat{f}(x)=0\}\subset\mathbb{R}^d$ and can be extracted using Marching Cubes~\citep{lorensen1987}.
It is quite easy to observe that, as $\lambda\rightarrow 0$, the optimization problem in Eq. (\ref{eq:emp_risk}) turns into the constrained minimization problem,
\begin{equation}
    \minimize_{f\in H}\Vert f\Vert_H\quad\text{s.t.}\quad f(x_i)=0\quad\text{and}\quad\nabla f(x_i)=n_i\quad\forall i\in\{1,2,\dots,m\}
    \label{eq:constr_min}
\end{equation}
which makes it easier to see how the predicted surface behaves \textit{away} from the input points.
As seen from Eq. (\ref{eq:constr_min}), while $\mathcal{\hat{S}}$ should interpolate the given points $\mathcal{X}$ exactly, the behavior in between and away from those points is solely determined by the induced norm $\Vert f\Vert_H$ of the function space, $H$, over which we are optimizing.
The properties of $\Vert f\Vert_H$ are uniquely defined by the reproducing kernel, $k$; this is a direct consequence of the Representer Theorem \citep{kimeldorf1970,schoelkopf2001} which states that solutions to Eq. (\ref{eq:emp_risk}) are of the form $f(x)=\sum_{i=1}^m\alpha_i k(x,x_i)$, yielding $\Vert f\Vert_H^2=\alpha^\top K\alpha$.
Here, $\alpha\in\R^m$ and $K=(K)_{ij}=k(x_i,x_j)\in\R^{m\times m}$.
This shows that the behavior of the norm and hence, estimated surfaces $\mathcal{\hat{S}}$, can be explicitly controlled by the kernel function, enabling easy injection of \textit{inductive biases} (such as smoothness assumptions) into the surface reconstruction problem.
\begin{figure}
    \centering
    \subfloat{\includegraphics[width=0.48\textwidth]{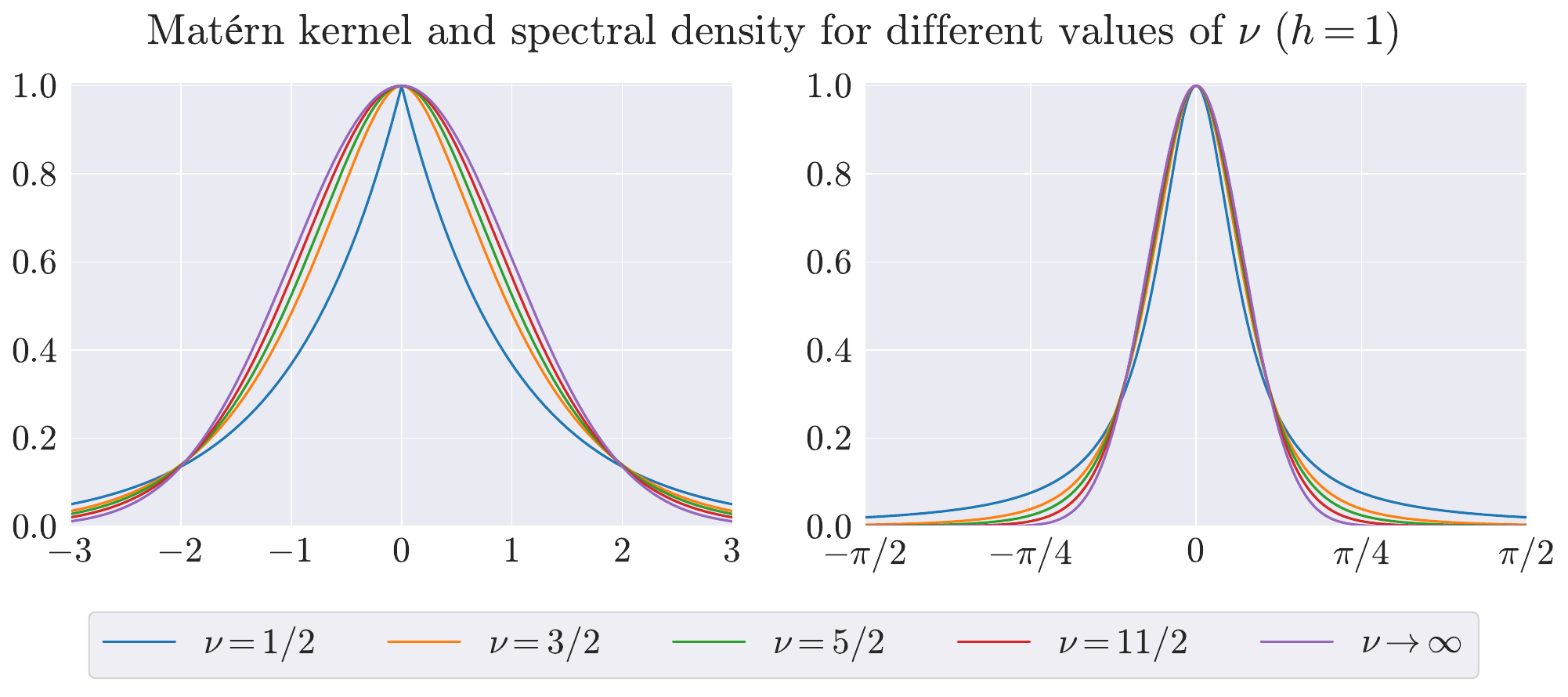}}
    \hfill
    \subfloat{\includegraphics[width=0.48\textwidth]{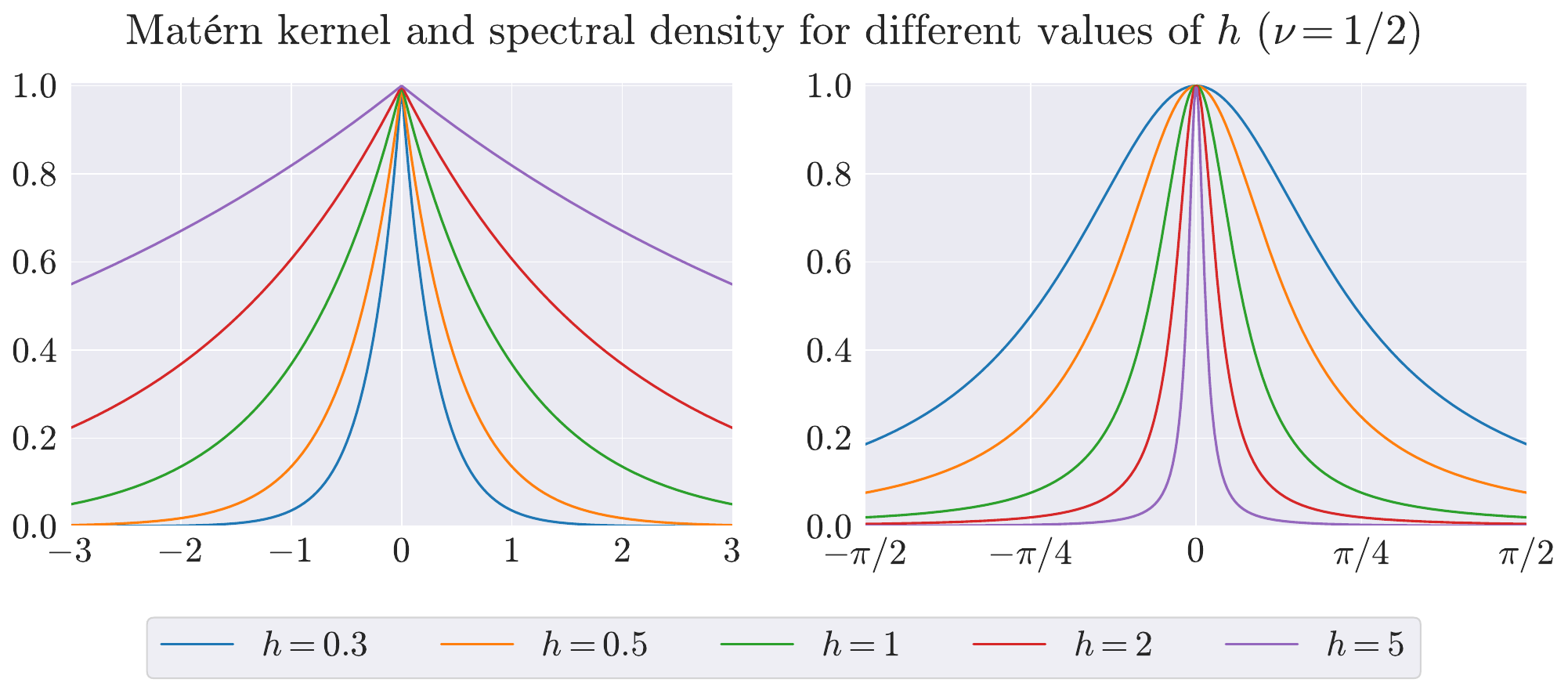}}
    \caption{\textbf{\Matern kernels with associated spectral densities.} We propose to use the family of \Matern kernels for tunable implicit surface reconstruction, parametrized by a smoothness parameter, $\nu>0$, that controls the differentiability of the kernel, and a bandwidth parameter $h>0$. Both parameters allow explicit manipulation of the kernel and its spectrum. Importantly, \Matern kernels recover the Laplace kernel for $\nu=1/2$ and the Gaussian kernel as $\nu\rightarrow\infty$.}
    \label{fig:matern_kernels}
\end{figure}
Even further, if the chosen kernel has adjustable parameters, they can be used to \textit{adaptively} (on a shape-by-shape basis) manipulate the inductive bias.

Although already used in the mid-90s and early 2000s \citep{savchenko1995,car1997,car2001}, only recently, kernel-based methods for 3D implicit surface reconstruction became extremely competitive, with \textit{Neural Kernel Surface Reconstruction} (NKSR; \cite{huang2023}) eventually evolving as the new state-of-the-art.
While early works mostly focused on (polyharmonic) radial basis functions such as thin-plate splines, recent work \citep{williams2021,williams2022} employ the first-order \textit{arc-cosine} kernel.
Introduced by \cite{cho2009}, arc-cosine kernels have been shown to mimic the computation of two-layer fully-connected ReLU networks; indeed, in the infinite-width limit and with \textit{fixed} bottom layer weights, the first-order arc-cosine kernel becomes the \textit{Neural Tangent Kernel} (NTK; \cite{jacot2018}) of the network.
One important aspect of this kernel is that it does not have adjustable parameters.
While this can be advantageous in some situations, it also prevents users from \textit{tuning} their reconstructions.
If the arc-cosine kernel fails to accurately recover a surface, then there is no possibility (except for using more observations) to improve the result.

\begin{wrapfigure}[22]{r}{0.48\textwidth}
    \vspace{-0.35cm}
    \includegraphics[width=0.48\textwidth]{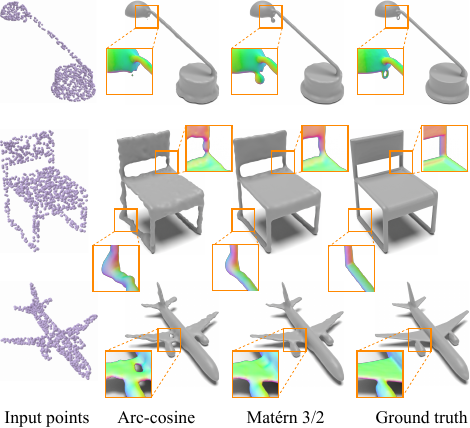}
    \caption{\textbf{Tunable \Matern kernels lead to better surface reconstructions than the previously employed arc-cosine kernel}. Here, we show surface reconstructions from sparse point clouds of just 1,000 points.}
    \label{fig:recons_ns_matern}
\end{wrapfigure}
In this work, we suggest a different family of kernels (with parameters $\nu,h>0$) for implicit surface reconstruction: \Matern kernels \citep{matern1986,stein1999}, see Figures \ref{fig:matern_kernels} and \ref{fig:recons_ns_matern}.
Contrary to arc-cosine kernels, \Matern kernels are stationary (hence translation invariant), spatially decaying (thus leading to sparse Gram matrices), and unify a variety of popular kernel functions, including the Laplace and Gaussian kernel.
As we will show, \Matern kernels have appealing properties, making them the ideal candidate for kernel-based surface reconstruction.
From a practical perspective, we demonstrate that a simple change of the kernel function from arc-cosine to \Matern leads to a consistently improved reconstruction accuracy and a \textit{significant} speedup.
From a theoretical perspective, we argue that \Matern kernels allow for \textit{tunable} surface reconstruction in the same way as Fourier feature mappings \citep{tancik2020} enable coordinate-based \textit{Multi-layer Perceptrons} (MLPs) to learn high-frequency functions in low-dimensional domains, effectively overcoming \textit{spectral bias} (the slow convergence towards high-frequency details).
In addition, we prove that, as the layer-width approaches infinity, with fixed bottom layer weights and under certain initializations, \Matern kernels are identified with the NTK of two-layer \textsc{Siren} (fully-connected MLPs with sine activations; \cite{sitzmann2020}) networks---together with Fourier feature mappings the two arguably most influential methods to overcome spectral bias in coordinate-based MLPs.
Lastly, we establish a connection between arc-cosine and \Matern kernels by showing that the Laplace (\Matern kernel with $\nu=1/2$) and first-order arc-cosine kernel are equal up to scaling and addition of another kernel function when restricted to the sphere.
In summary, our key contributions are:
\begin{itemize}
    \item We propose to use \Matern kernels for \textit{tunable} implicit surface reconstruction.
    \item We theoretically analyze \Matern kernels, relating them to Fourier feature mappings, \textsc{Siren} networks, and arc-cosine kernels. Moreover, we derive practical insights into how to choose the tunable bandwidth parameter based on a new bound of the $L_2$ reconstruction error.
    \item We propose data-dependent (\textit{i.e.}, learnable) \Matern kernels by leveraging the \textit{Neural Kernel Field} (NKF) framework \citep{williams2022}.
\end{itemize}
Our experimental evaluation reveals that \Matern 1/2 and 3/2 are extremely competitive, outperforming the arc-cosine kernel while being significantly easier to implement (essentially two lines of standard PyTorch code), faster to compute, and scalable.
In addition to geometry, we show that \Matern kernels surpass the arc-cosine kernel in reconstructing other high-frequency scene attributes, such as texture. 
Finally, we demonstrate that learnable \Matern kernels (1) outperform the data-dependent arc-cosine kernel (as implemented in the original NKF framework) while being more than four times faster to train, and (2) perform almost on par with highly sophisticated and well-engineered NKSR in the noise-free case while having a more than five times shorter training time. 

\section{Related Work}
We briefly review some relevant literature about 3D \textit{implicit} surface reconstruction from oriented point clouds, focusing on kernel-based methods. 
For a more in-depth overview, including neural-network-based reconstruction methods, see surveys by~\cite{berger2017,huang2022}.

Early kernel-based 3D surface reconstruction methods mostly employ \textit{Radial Basis Functions} (RBFs) such as thin-plate splines~\citep{savchenko1995}, biharmonic RBFs~\citep{car1997,car2001}, or Gaussian kernels~\citep{schoelkopf2005}.
Nowadays, the most widely used surface reconstruction technique is \textit{Screened Poisson Surface Reconstruction} (SPSR;~\cite{kazhdan2013}) which, however, can itself be viewed as kernel method~\citep{williams2021}.
Only recently, \textit{Neural Splines} (NS;~\cite{williams2021}) proposed to use the so-called (first-order) \textit{arc-cosine} kernel,
\begin{equation}
    k_\text{AC}(x,y)=\frac{\Vert x\Vert\Vert y\Vert}{\pi}\left(\sin\theta+(\pi-\theta)\cos\theta\right),\quad\text{where}\quad\theta=\cos^{-1}\left(\frac{x^\top y}{\Vert x\Vert\Vert y\Vert}\right)
    \label{eq:def_arc_cosine}
\end{equation}
for implicit surface reconstruction, achieving state-of-the-art results that outperform classical surface reconstruction techniques and non-linear methods based on neural networks by a large margin.
This method laid the cornerstone for \textit{Neural Kernel Fields} (NKFs;~\cite{williams2022}) which attempts to make arc-cosine kernels learnable by passing input points through a task-specific neural network before evaluating the kernel function, similar to Deep Kernel Learning~\citep{wilson2016}.
Based on SPSR, NeuralGalerkin~\citep{huang2022_ng} proposed to \textit{learn} basis functions inferred by a sparse convolutional network instead of using a fixed B\'{e}zier basis as in SPSR, hence can be seen as kernel method in the broader sense.
\textit{Neural Kernel Surface Reconstruction} (NKSR;~\cite{huang2023}) built upon NKF and NeuralGalerkin and proposed an all-purpose, highly scalable surface reconstruction method that is robust against noise and eventually became state-of-the-art.

\section{\Matern Kernels for Tunable Surface Reconstruction}
We propose to use the family of \textit{\Matern}kernels~\citep{matern1986,stein1999} for implicit surface reconstruction, being defined as
\begin{equation}
    k_\nu(x,y)=\Phi_\nu(\Vert x-y\Vert)=\Phi_\nu(\tau)=\frac{2^{1-\nu}}{\Gamma(\nu)}\left(\frac{\sqrt{2\nu}\tau}{h}\right)^\nu K_\nu\left(\frac{\sqrt{2\nu}\tau}{h}\right),
    \label{eq:def_matern}
\end{equation}
where $\nu>0$ is a \textit{smoothness parameter} that explicitly controls the differentiability, and $h>0$ is known as the shape parameter (or \textit{bandwidth}) of the kernel.
$\Gamma$ denotes the Gamma function, and $K_\nu$ is the modified Bessel function of the second kind of order $\nu$.
\Matern kernels generalize a variety of other kernel functions; the most popular ones can be written in closed form as
\begin{align}
    \nu&=1/2:\Phi_{\text{1/2}}(\tau)=\exp\left(-\frac{\tau}{h}\right),\\
    \nu&=3/2:\Phi_{\text{3/2}}(\tau)=\exp\left(-\frac{\sqrt{3}\tau}{h}\right)\left(1+\frac{\sqrt{3}\tau}{h}\right),\\
    \nu&=5/2:\Phi_{\text{5/2}}(\tau)=\exp\left(-\frac{\sqrt{5}\tau}{h}\right)\left(1+\frac{\sqrt{5}\tau}{h}+\frac{5\tau^2}{3h^2}\right),
\end{align}
where $\Phi_{\text{1/2}}$ is known as the Laplace kernel. 
In the limiting case of $\nu\rightarrow\infty$, \Matern kernels recover the popular Gaussian kernel:
\begin{equation}\label{eq:gaussian_kernel}
    \Phi_\infty(\tau):=\lim_{\nu\rightarrow\infty}\Phi_\nu(\tau)=\exp\left(-\frac{\tau^2}{2h^2}\right).
\end{equation}
For more information about \Matern kernels, please refer to \cite{porcu2023}.
Next, we revisit some basic properties of \Matern kernels and place them in the context of surface reconstruction.  

\subsection{Basic Properties}
\label{subsec:basic_props}
\textbf{Differentiable.}
\Matern kernels allow for \textit{controlled} smoothness, being exactly $\lceil\nu\rceil-1$ times differentiable (in the mean-square sense).
Since functions $f$ in an RKHS $H$ inherit the differentiability class of the inducing kernel $k$ due to the reproducing property, the from Eq. (\ref{eq:emp_risk}) reconstructed surface $\mathcal{\hat{S}}$ enjoys the same smoothness properties as $k$.
In the context of 3D surface reconstruction, this allows an easy injection of inductive biases into the reconstruction problem; for instance, if the roughness or noisiness of the objects to be reconstructed is known in advance, one can adjust the smoothness of the kernel accordingly.
This is not possible with the arc-cosine kernel.

\textbf{Stationary.}
\Matern kernels are stationary, \textit{i.e.}, they only depend on the difference $x-y$ between two points $x,y\in\Omega$.
If the distance is Euclidean, \Matern kernels are also \textit{isotropic}; they only depend on $\Vert x-y\Vert$.
As a result, \Matern kernels are rotation \textit{and} translation invariant, hence being independent of the absolute embedding of points $\mathcal{X}$ (based on the reproducing property, translating input points does not change the reconstructed surface).
Consequently, since the kernel's value only depends on the \textit{relative} positions, objects in a scene (or parts of an object) with similar geometric properties will be reconstructed consistently as long as the relative distances between points on each instance (or part) remain the same.
This is in contrast to the arc-cosine kernel, whose value depends on the \textit{absolute} position of points, see Eq. (\ref{eq:def_arc_cosine}); hence, it is \textit{not} translation invariant (thus \textit{non-stationary}), and multiple instances of an object in a scene will generally not yield the same surface reconstruction.
See Appendix \ref{ap:sec:translation_matern} for further discussion.

\textbf{Spatially decaying.}
As opposed to arc-cosine kernels, \Matern kernels are spatially decaying; they tend to zero as the distance $\Vert x-y\Vert$ becomes large. 
Consequently, although \Matern kernels are technically not compact (or locally supported), because their value decays exponentially fast \citep{porcu2023}, they can be considered \textit{effectively compact}.
From a computational perspective, this is a very desirable property as it leads to \textit{sparse} Gram matrices when truncated with a special kernel \citep{genton2001}, allowing the use of highly efficient and scalable sparse linear solvers.
Finally, we note that simply truncating any kernel does not yield a valid, positive definite kernel in general.

We now derive new theoretical insights into \Matern kernels, ultimately aiming to provide arguments as to why we believe this family of kernels is particularly well suited for surface reconstruction.

\begin{figure}
    \centering
    \subfloat{\includegraphics[width=0.35\textwidth]{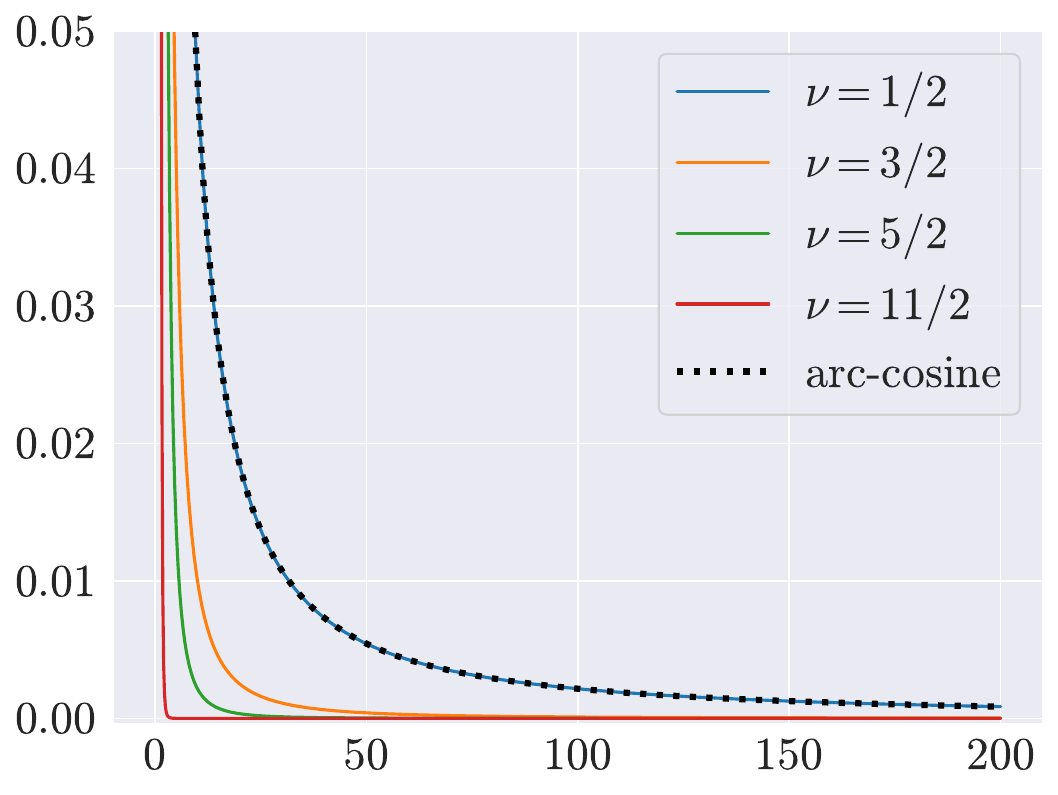}}
    \hspace{.75cm}
    \subfloat{\includegraphics[width=0.35\textwidth]{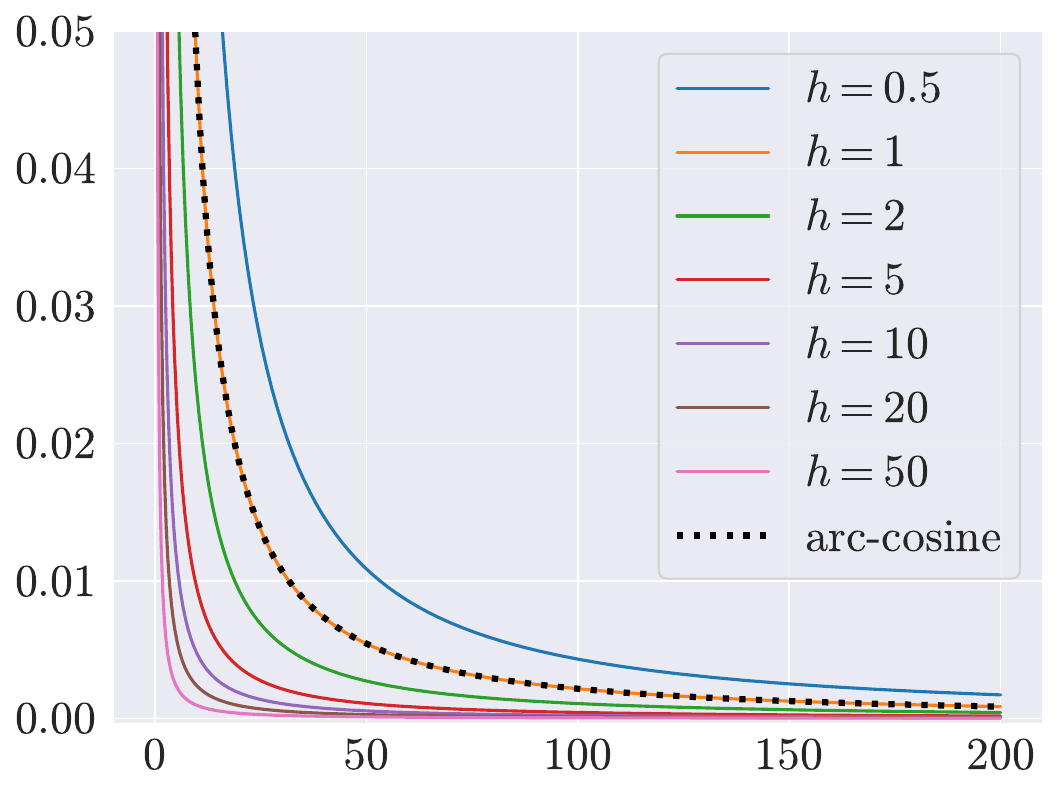}}
    \caption{\textbf{Eigenvalue decay of \Matern kernels.} While we fix $h=1$ and vary $\nu$ on the left, the EDR for $\nu=1/2$ and different values of $h$ is shown on the right. Larger values of $\nu$ and $h$, \textit{i.e.}, smoother kernels, lead to faster eigenvalue decay; hence, slow convergence to high-frequency details.}
    \label{fig:eigenvalue_decay}
\end{figure}

\subsection{Eigenvalue Decay, Spectral Bias, and Reconstruction Error}
\label{subsec:analysis_edr}
\textbf{Eigenvalue decay and spectral bias.}
We begin by showing that \Matern kernels allow for tunable surface reconstruction in the same way as Fourier feature mappings help coordinate-based MLPs overcome spectral bias.
As investigated in \cite{tancik2020}, a rapid decrease of the NTK's eigenvalues implies an associated MLP's slow convergence to high-frequency components of the target function (up to the point where the network is practically unable to learn these components).
Consequently, a smaller eigenvalue decay rate (EDR), \textit{i.e.}, a slower eigenvalue decay, leads to faster convergence to high-frequency content---in the context of implicit surface reconstruction, more detailed geometry.
To overcome this slow convergence, a phenomenon known as \textit{spectral bias}, \cite{tancik2020} use a Fourier feature mapping of the form 
\begin{equation}
    \gamma(x)=[a_1\cos(2\pi b_1^\top x),a_1\sin(2\pi b_1^\top x),\dots,a_q\cos(2\pi b_q^\top x),a_q\sin(2\pi b_q^\top x)]^\top
\end{equation}
applied to the inputs $x\in\mathbb{R}^d$ before passing them to the MLP, effectively transforming the MLP's NTK, $k_\text{NTK}$, into a \textit{stationary} kernel, $k_\text{NTK}(\gamma(x)^\top\gamma(y))=k_\text{NTK}(k_\gamma(x-y))=:k'_\text{NTK}(x-y)$ whose spectrum can be tuned through manipulation of the Fourier basis frequencies, $b_i\in\mathbb{R}^d$, and corresponding coefficients, $a_i\in\mathbb{R}$, of the kernel $k_\gamma(x-y)=\sum_{i=1}^q a^2_i\cos(2\pi b^\top_i(x-y))$.
This shows that $\gamma$ enables explicit control over the NTK's EDR, overcoming spectral bias.

\Matern kernels, being stationary, allow for the same degree of control over the EDR as a Fourier feature mapping; their spectrum can also be tuned (by varying $\nu$ and/or $h$, see Figure \ref{fig:matern_kernels}).
This is in stark contrast to previously employed, non-stationary and parameter-less arc-cosine kernels, whose spectrum is not tunable.
To further study the dependence of the EDR on $\nu$ and $h$, we use:
\begin{theorem}[\cite{seeger2008}, Theorem 3]\label{thm:edr_matern}
   The eigenvalues of \Matern kernels decay polynomially at rate $$\Theta\left(h^{-2\nu}s^{-(1+2\nu/d)}\right)$$ with bandwidth parameter $h>0$, and for \textup{finite} smoothness $0<\nu<\infty$.
\end{theorem}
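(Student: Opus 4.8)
The plan is to reduce the statement to the tail behaviour of the \Matern spectral density $p_\nu$ given in Eq.~(\ref{eq:spectral_density_matern}), combined with the standard principle that, for a stationary kernel, the ordered Mercer eigenvalues of the associated integral operator on a bounded domain track the decreasing rearrangement of the spectral density with respect to Lebesgue measure. Concretely, one works on a bounded cube (equivalently the torus $\mathbb{T}^d$) with a fixed input density bounded above and below; the Fourier characters then nearly diagonalise the convolution operator $f\mapsto\int\Phi_\nu(\|x-y\|)f(y)\,\mathrm{d}y$, and the eigenvalue associated with frequency $\omega$ is, up to two-sided constants, $p_\nu(\omega)$ (more precisely its periodisation, whose tail is governed by the single dominant term). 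Ordering the frequencies by increasing $\|\omega\|$, the $s$-th one satisfies $\|\omega_s\|\asymp s^{1/d}$, because the number of lattice frequencies in a ball of radius $R$ grows like $R^d$; hence $\lambda_s\asymp p_\nu(\omega_s)$.

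It then remains to read off the tail of $p_\nu$. From Eq.~(\ref{eq:spectral_density_matern}),
\begin{equation}
p_\nu(\omega)=h^{-2\nu}C_{d,\nu}\left(\frac{2\nu}{h^2}+(2\pi\|\omega\|)^2\right)^{-(\nu+d/2)},
\end{equation}
so for large $\|\omega\|$ (say $\|\omega\|\ge 1/h$) one has $p_\nu(\omega)=\Theta\!\left(h^{-2\nu}\|\omega\|^{-(2\nu+d)}\right)$, with the implied constant depending only on $d$ and $\nu$. Substituting $\|\omega_s\|\asymp s^{1/d}$ yields $\lambda_s\asymp h^{-2\nu}s^{-(2\nu+d)/d}=h^{-2\nu}s^{-(1+2\nu/d)}$, which is the claimed rate; the restriction to finite $\nu$ enters precisely here, since the polynomial tail of $p_\nu$ is what produces a polynomial (rather than the super-polynomial, Gaussian-type) eigenvalue decay, and the argument degenerates as $\nu\to\infty$. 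An equivalent and arguably cleaner route is to note that Eq.~(\ref{eq:rkhs_norm}) exhibits $H_\nu$ as norm-equivalent to the Sobolev space $H^{\nu+d/2}(\R^d)$, with equivalence constants scaling like $h^{2\nu}$, and then invoke the classical spectral asymptotics $\lambda_s\asymp s^{-2\beta/d}$ for integral operators induced by $H^\beta$-smooth kernels on bounded domains with $\beta=\nu+d/2$; tracking the $h$-dependence through the equivalence constants reproduces the $h^{-2\nu}$ prefactor.

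The main obstacle is making the correspondence between the continuous spectral density and the discrete Mercer eigenvalues rigorous: on $\R^d$ the convolution operator is neither compact nor trace-class, and once the domain is truncated the Fourier basis only approximately diagonalises it. The standard way to close this gap is a two-sided min--max argument: for the upper bound on $\lambda_s$, apply the Courant--Fischer characterisation to the span of the $s$ lowest-frequency truncated Fourier modes and control the truncation error; for the lower bound, exhibit $s$ near-orthogonal localised bump functions on which the quadratic form is bounded below by the corresponding spectral-density values. One must also verify that all implied constants are independent of $s$ (they may, and do, depend on $d$, $\nu$, and the domain), so that only the displayed $h^{-2\nu}$ and $s^{-(1+2\nu/d)}$ factors survive inside the $\Theta$; this bookkeeping of the bandwidth scaling---most transparently handled by the rescaling $\omega\mapsto h\omega$ in Eq.~(\ref{eq:spectral_density_matern})---is the one genuinely delicate point, and is exactly what the cited Theorem~3 of \cite{seeger2008} carries out.
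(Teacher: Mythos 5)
There is nothing in the paper to check your argument against: Theorem~\ref{thm:edr_matern} is imported verbatim from \cite{seeger2008} (their Theorem~3) and the paper gives no proof of it, using it only as a black box in Section~\ref{subsec:analysis_edr}. Your sketch is nevertheless the standard and correct route to this result, and it matches the strategy of the cited source: on a bounded domain with an input density bounded above and below, the Mercer eigenvalues of the stationary \Matern kernel track the decreasing rearrangement of the spectral density in Eq.~(\ref{eq:spectral_density_matern}); the tail $p_\nu(\omega)=\Theta\bigl(h^{-2\nu}\Vert\omega\Vert^{-(2\nu+d)}\bigr)$ together with the Weyl-type count $\Vert\omega_s\Vert\asymp s^{1/d}$ gives $\lambda_s\asymp h^{-2\nu}s^{-(1+2\nu/d)}$, and you correctly locate both where finiteness of $\nu$ is used and where the $h^{-2\nu}$ prefactor comes from. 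You are also right that the genuinely technical content is the rigorous two-sided comparison (periodisation/truncation plus Courant--Fischer, or equivalently a Widom-type theorem), which is precisely what the citation supplies; as a blind reconstruction this is acceptable provided you present it as a proof sketch deferring that step. One small caution on your ``cleaner'' alternative: the norm equivalence between $H_\nu$ and $H^{\nu+d/2}(\R^d)$ read off from Eq.~(\ref{eq:rkhs_norm}) does not have both equivalence constants scaling like $h^{2\nu}$ --- the low-frequency side of the weight scales differently (like $h^{-d}$ after the $h^{2\nu}$ prefactor) --- so tracking the bandwidth through that route requires the same care you already flag for the direct argument; only the high-frequency regime, which governs the eigenvalue tail, yields the clean $h^{-2\nu}$ factor.
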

Figure \ref{fig:eigenvalue_decay} visualizes the EDR for different values of $\nu$ and $h$ along with the EDR of the non-tunable, first-order arc-cosine kernel which equates to $\Theta(s^{-(1+d)/d})$ and has only been very recently proven by \cite{li2024} for general input domains and distributions.
As seen, the smoother the kernel (\textit{i.e.}, the larger $\nu$) and the greater $h$, the faster the eigenvalues decay.
This implies less detailed surface reconstruction (with a higher error) for smoother kernels and larger $h$; indeed, this is exactly what we observe in practice, see Figure \ref{fig:tunable_h}.
Contrary, by lowering both parameters, one can achieve faster convergence to high-frequency geometric details, effectively leading to more nuanced surface reconstruction and a smaller reconstruction error.

\begin{figure}
    \centering
    \includegraphics[width=\textwidth]{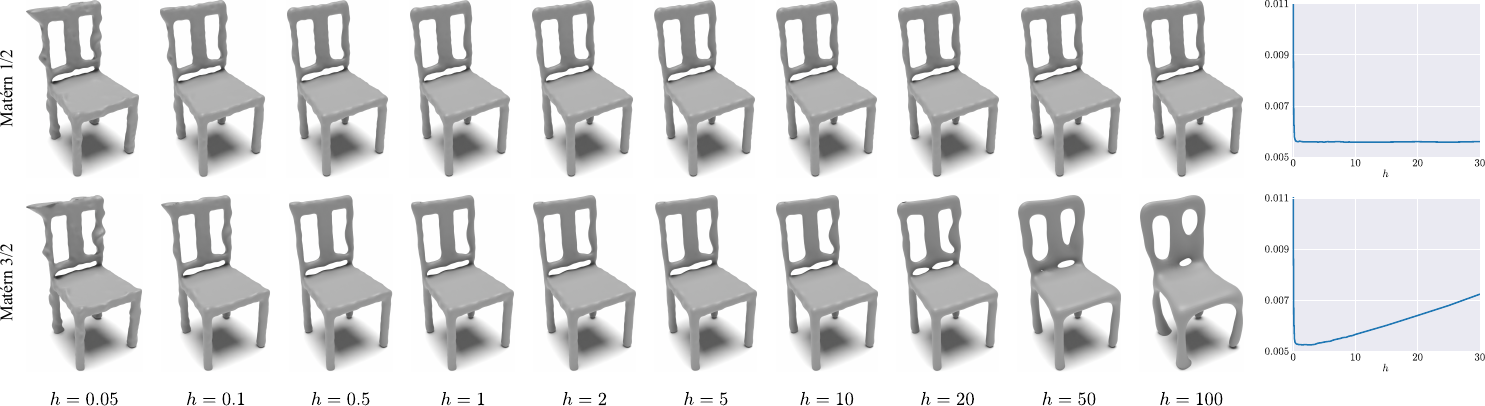}
    \caption{\textbf{\Matern kernels are tunable}. Surface reconstructions can be improved in practice by varying the bandwidth parameter, $h$, effectively tuning the kernels' EDR (see also Figure~\ref{fig:eigenvalue_decay}). However, setting $h$ too small ($<0.5$) results in overfitting, while setting $h$ too big ($>10$) oversmooths (\textit{i.e.}, underfits) the true surface. This is also reflected in the reconstruction error (measured using Chamfer distance), plotted as a function of $h$ on the right. }
    \label{fig:tunable_h}
\end{figure}

\textbf{Reconstruction error.}
We proceed by investigating the bandwidth's influence on the $L_2$ reconstruction error, which is defined (and can be bounded) as
\begin{align}
    \Vert f-\hat{f}\Vert_{L_2}=\left(\int_{\Omega}\left(f(x)-\hat{f}(x)\right)^2\mathrm{d}x\right)^{1/2}\leq C_{\mathcal{X},\Omega}^{(\nu+d)/2}\Vert f\Vert_{H_\nu},
    \label{eq:l2_reconstruction_error}
\end{align}
where $C_{\mathcal{X},\Omega}$ is a constant that only depends on $\mathcal{X}$ and $\Omega$, and 
\begin{equation}
    \Vert f\Vert_{H_\nu}^2=h^{2\nu}\left((2\pi)^{d/2}C_{d,\nu}\right)^{-1}\int_{\mathbb{R}^d}\left(\frac{2\nu}{h^2}+\left(2\pi\Vert\omega\Vert\right)^2\right)^{\nu+d/2}|\mathcal{F}[f](\omega)|^2\mathrm{d}\omega
    \label{eq:rkhs_norm}
\end{equation}
is the norm of the RKHS induced by a \Matern kernel with bandwidth $h$ and smoothness $\nu$.
Moreover, $C_{d,\nu}:=(2^d\pi^{d/2}\Gamma(\nu+d/2)(2\nu)^\nu)/\Gamma(\nu)$, and $\mathcal{F}[f]$ denotes the Fourier transform of a function $f$.
For more information on the bound, including details on $C_{\mathcal{X},\Omega}$, please see, \textit{e.g.}, \cite{santin2016}.
By inspecting Eq. (\ref{eq:l2_reconstruction_error}), we notice that the magnitude of the RKHS norm has a considerably high influence on the reconstruction error.
Based on this observation and the fact that $\Vert f\Vert_{H_\nu}$ depends on $h$, our goal is now to further study the effect of $h$ on the RKHS norm.
As a first result, we have: 
\begin{restatable}{lemma}{lemmaBoundNorm}\label{lemma:bound_norm}
    The RKHS norm of \Matern kernels as defined in Eq. (\ref{eq:rkhs_norm}) can be bounded by $$\Vert f\Vert^2_{H_\nu}\leq h^{2\nu}\left(\frac{1}{h^{2\nu+d}}C_{d,\nu}^1(\mathcal{F}[f])+C^2_{d,\nu}(\mathcal{F}[f])\right),$$ where $h>0$, and $C^1_{d,\nu}$ and $C^2_{d,\nu}$ are functions of $\mathcal{F}[f]$ that do not depend on $h$.
\end{restatable}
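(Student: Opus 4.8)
The plan is to work directly from the Fourier-side formula for the RKHS norm in Eq.~(\ref{eq:rkhs_norm}) and to decouple the $h$-dependence that sits inside the exponent $\nu+d/2$ from the part of the integral that does not see $h$. Write $p:=\nu+d/2$, a fixed positive number (and in general not an integer, so the binomial theorem is unavailable). The only tool I really need is the elementary inequality $(x+y)^p\leq 2^p\max\{x,y\}^p\leq 2^p(x^p+y^p)$, valid for all $x,y\geq 0$ and any $p>0$. Applying it with $x=2\nu/h^2$ and $y=(2\pi\Vert\omega\Vert)^2$ gives the pointwise bound
\begin{equation*}
\left(\frac{2\nu}{h^2}+(2\pi\Vert\omega\Vert)^2\right)^{\nu+d/2}\;\leq\;2^{\nu+d/2}\left(\frac{(2\nu)^{\nu+d/2}}{h^{2\nu+d}}+(2\pi\Vert\omega\Vert)^{2\nu+d}\right).
\end{equation*}

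Next I would substitute this bound into Eq.~(\ref{eq:rkhs_norm}), keep the global prefactor $h^{2\nu}\left((2\pi)^{d/2}C_{d,\nu}\right)^{-1}$ out front, and split the integral by linearity into the term carrying the factor $h^{-(2\nu+d)}$ and the $h$-free term. This produces exactly the stated inequality with
\begin{align*}
C^1_{d,\nu}(\mathcal{F}[f])&:=\frac{2^{\nu+d/2}(2\nu)^{\nu+d/2}}{(2\pi)^{d/2}C_{d,\nu}}\int_{\mathbb{R}^d}|\mathcal{F}[f](\omega)|^2\,\mathrm{d}\omega,\\
C^2_{d,\nu}(\mathcal{F}[f])&:=\frac{2^{\nu+d/2}}{(2\pi)^{d/2}C_{d,\nu}}\int_{\mathbb{R}^d}(2\pi\Vert\omega\Vert)^{2\nu+d}\,|\mathcal{F}[f](\omega)|^2\,\mathrm{d}\omega,
\end{align*}
neither of which contains $h$. (If one wants the constants written verbatim as in the statement, one simply declares them to be functions of $d$, $\nu$, and $\mathcal{F}[f]$ only, which is what the displayed formulas exhibit.)

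It then remains to check that these two constants are finite, so that the bound is not vacuous. For $C^1_{d,\nu}$ this is immediate: by Plancherel the integral equals $\Vert f\Vert_{L_2}^2$, finite because $f\in L_2(\mathbb{R}^d)$. For $C^2_{d,\nu}$ I would use the reverse of the same splitting idea, namely the pointwise estimate $(2\pi\Vert\omega\Vert)^{2\nu+d}\leq\left(\frac{2\nu}{h^2}+(2\pi\Vert\omega\Vert)^2\right)^{\nu+d/2}$ (dropping a nonnegative summand inside a base raised to a nonnegative power), so that the integral defining $C^2_{d,\nu}$ is controlled by a fixed multiple of $\Vert f\Vert_{H_\nu}^2$, which is finite by hypothesis ($f\in H_\nu$). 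There is no serious obstacle here; the single point worth handling carefully is the non-integer exponent, which is precisely why I use the crude $(x+y)^p\leq 2^p(x^p+y^p)$ instead of expanding, together with the appeal to $f\in H_\nu$ to guarantee the $h$-free piece is integrable.
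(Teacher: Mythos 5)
Your proof is correct and takes essentially the same route as the paper's: both bound the integrand pointwise by splitting $\left(\frac{2\nu}{h^2}+(2\pi\Vert\omega\Vert)^2\right)^{\nu+d/2}$ into an $h$-dependent piece and an $h$-free piece, then absorb all remaining constants into $C^1_{d,\nu}(\mathcal{F}[f])$ and $C^2_{d,\nu}(\mathcal{F}[f])$. Your version is in fact slightly more careful than the paper's: the paper applies $(x+y)^{p}\leq x^{p}+y^{p}$ directly, which only holds for $p\leq 1$ and so for $p=\nu+d/2>1$ needs the factor $2^{p-1}$ (or your cruder $2^{p}$) that you carry along; since that factor is independent of $h$ it does not affect the statement, and your added finiteness check of $C^2_{d,\nu}$ via $f\in H_\nu$ is a harmless bonus the paper omits.
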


A proof can be found in Appendix \ref{ap:sec:proof_bound_norm}.
We immediately see that $\Vert f\Vert^2_{H_\nu}\rightarrow\infty$ in either cases, $h\rightarrow 0$ \textit{and} $h\rightarrow\infty$.
Moreover, based on Proposition \ref{lemma:bound_norm}, it is easy to show that the norm, as a function of $h$, decreases as $h\rightarrow h^*$ from the left, and increases as $h\rightarrow\infty$, where $h^*$ is given as
\begin{equation}
    h^*=\left(\frac{d}{2\nu}Q\right)^{1/(2\nu+d)}\quad\text{with}\quad Q=\frac{C^1_{d,\nu}(\mathcal{F}[f])}{C^2_{d,\nu}(\mathcal{F}[f])}
    \label{eq:optimal_h}
\end{equation}
\begin{wrapfigure}[13]{r}{0.35\textwidth}
    \vspace{-0.4cm}
    \includegraphics[width=0.35\textwidth]{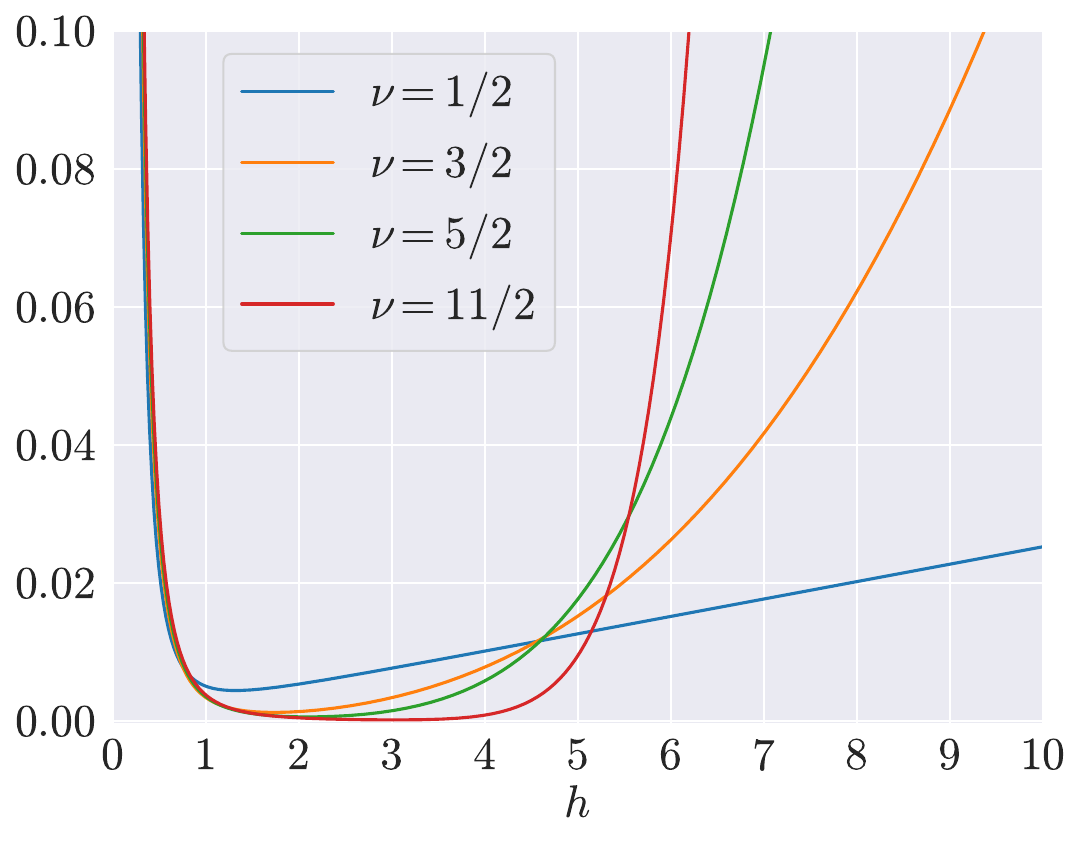}
    \caption{\textbf{RKHS norm as a function of $h$}. We plot the bound from Proposition \ref{lemma:bound_norm} as a function of $h$.}
  \label{fig:rhks_norm_as_h}
\end{wrapfigure}
(see Appendix \ref{ap:sec:derivation_h_opt} for details).
Our analysis shows that the reconstruction error can \textit{not} be made arbitrarily small by just lowering $h$; if $h$ is chosen too small, the reconstruction error starts to increase again, \textit{i.e.}, it \textit{overfits} the true surface.  
Conversely, if $h$ is set too high, the resulting surface is too smooth (\textit{underfits} the true surface), ultimately leading to increased errors.
The optimal trade-off is reached if $h=h^*$.
This is also observed in practice, see Figure \ref{fig:tunable_h}.
As seen from Figure \ref{fig:rhks_norm_as_h}, however, the described effect is more pronounced for smoother \Matern kernels (large $\nu$), as, at some point, the norm (hence, the reconstruction error) increases rapidly for large $h$.
Our theoretical analysis reveals several practical insights.
First, the reconstruction error is generally more sensitive to very small values of $h$.
Second, a good starting point is always $h=1$.

\subsection{Relation to Neural Networks and the Arc-cosine Kernel}
\textbf{Relation to neural networks.}
Next, we study the connection between \Matern kernels and neural networks.
It is well known that the first-order arc-cosine kernel mimics the computation of a two-layer, infinite-width ReLU network $f:\R^d\longrightarrow\R$, $f(x)=m^{-1/2}\sum_{i=1}^m v_i[w^\top_i x+b_i]_+$ when the bottom layer weights $W=(w_1,w_2,\dots,w_m)\in\R^{d\times m}$ and biases $b=(b_1,b_2,\dots,b_m)\in\R^m$ are \textit{fixed} from initialization and drawn from a standard normal distribution, see, \textit{e.g.}, \cite{cho2009,williams2021}.
Here, $[x]_+:=\max\{0, x\}$ denotes the ReLU activation function.

Under similar assumptions, we now show that \Matern kernels mimic the computation of two-layer \textsc{Siren}s \citep{sitzmann2020} if the layer-width approaches infinity. 
We claim the following:

\begin{restatable}{theorem}{thmMaternSirenNtk}\label{thm:matern_siren_ntk}
    Consider a two-layer fully-connected network $f:\R^d\longrightarrow\R$ with sine activation function, $m$ hidden neurons, and fixed bottom layer weights $W=(w_1,w_2,\dots,w_m)\in\R^{d\times m}$ and biases $b=(b_1,b_2,\dots,b_m)\in\R^m$. Let $h,\nu>0$ and $C_{d,\nu}$ be defined as in Eq. (\ref{eq:rkhs_norm}). If $w_i$ is randomly initialized from $$p_\nu(\omega)=h^{-2\nu}C_{d,\nu}\left(\frac{2\nu}{h^2}+(2\pi\Vert\omega\Vert)^2\right)^{-(\nu+d/2)}$$ and $b_i\sim\mathcal{U}(0,2\pi)$, the NTK of $f$ is a \Matern kernel with bandwidth $h$ and smoothness $\nu$ as $m\rightarrow\infty$.
\end{restatable}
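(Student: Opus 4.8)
**The plan is to compute the NTK of a two-layer SIREN directly from the definition and recognize the resulting expectation as a Monte-Carlo approximation of the Matérn kernel's spectral density integral.** Since the bottom-layer weights $W$ and biases $b$ are frozen at initialization, the only trainable parameters are the top-layer weights $v = (v_1, \dots, v_m)$. The NTK therefore reduces to the inner product of the Jacobians with respect to $v$ only: with $f(x) = m^{-1/2}\sum_{i=1}^m v_i \sin(w_i^\top x + b_i)$, we have $\partial f(x)/\partial v_i = m^{-1/2}\sin(w_i^\top x + b_i)$, so
\begin{equation}
    k_{\text{NTK}}(x,y) = \sum_{i=1}^m \frac{\partial f(x)}{\partial v_i}\frac{\partial f(y)}{\partial v_i} = \frac{1}{m}\sum_{i=1}^m \sin(w_i^\top x + b_i)\sin(w_i^\top y + b_i).
\end{equation}
By the law of large numbers, as $m \to \infty$ this converges (almost surely, for fixed $x,y$) to $\E_{w \sim p_\nu,\, b \sim \mathcal{U}(0,2\pi)}\!\left[\sin(w^\top x + b)\sin(w^\top y + b)\right]$.

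Next I would evaluate the expectation over $b$ first. Using the product-to-sum identity $\sin A \sin B = \tfrac12[\cos(A-B) - \cos(A+B)]$ with $A = w^\top x + b$, $B = w^\top y + b$, the difference $A - B = w^\top(x-y)$ is independent of $b$, while the sum $A + B = w^\top(x+y) + 2b$ carries all the $b$-dependence. Since $\E_{b \sim \mathcal{U}(0,2\pi)}[\cos(\theta + 2b)] = 0$ for any fixed $\theta$, the second term vanishes in expectation, leaving
\begin{equation}
    k_{\text{NTK}}(x,y) = \tfrac12\,\E_{w \sim p_\nu}\!\left[\cos\!\left(w^\top(x-y)\right)\right].
\end{equation}
(A harmless overall constant factor of $\tfrac12$ appears; this can be absorbed into the definition or noted as an immaterial normalization, and I would flag it as such.) This is precisely $\tfrac12$ times the real part of the characteristic function of $p_\nu$ evaluated at $x - y$, i.e. $\tfrac12$ times the Fourier transform of $p_\nu$.

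The final step is to invoke Bochner's theorem in the exact form already stated in the excerpt: the paper defines $p_\nu(\omega) = \mathcal{F}^{-1}[\Phi_\nu]$ as the inverse Fourier transform of the Matérn kernel, so $\mathcal{F}[p_\nu] = \Phi_\nu$, and since $p_\nu$ is a symmetric (radial) density the characteristic function equals its real part. Hence $\E_{w \sim p_\nu}[\cos(w^\top(x-y))] = \Phi_\nu(\Vert x - y\Vert) = k_\nu(x,y)$, completing the identification of the limiting NTK with the Matérn kernel of bandwidth $h$ and smoothness $\nu$. I would be careful to state convergence as pointwise (for each fixed pair $x,y$), which is all that is claimed, rather than uniform; the bottom-layer parameters being frozen is what makes the whole argument clean, since it kills any contribution from $\partial f/\partial W$ and $\partial f/\partial b$ that would otherwise require the usual infinite-width limiting arguments.

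**The main obstacle** is not any single hard step but rather bookkeeping: correctly handling the constant factor (the $\tfrac12$, and whether the spectral density $p_\nu$ as normalized in Eq.~(\ref{eq:spectral_density_matern}) integrates to $\Phi_\nu(0) = 1$ so that it is genuinely a probability density one can sample from), and being precise that the "$m \to \infty$" statement is a law-of-large-numbers limit. One should check that $p_\nu$ indeed integrates to one — this follows from $\mathcal{F}[p_\nu] = \Phi_\nu$ evaluated at $0$ together with $\Phi_\nu(0) = 1$ — so that "$w_i$ randomly initialized from $p_\nu(\omega)$" is well-posed. Everything else is a short, standard computation, closely paralleling the classical Rahimi–Recht random-features derivation cited in the excerpt, except run in reverse: instead of using random features to approximate a known kernel, we observe that the SIREN's frozen random first layer with sine nonlinearity is *exactly* a random-features construction for the Matérn kernel.
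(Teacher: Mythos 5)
Your proof is correct and follows essentially the same route as the paper's: freeze the first layer so the NTK reduces to the random sine-feature inner product, let the uniform bias kill the sum-frequency term, and identify the remaining expectation $\E_{w\sim p_\nu}[\cos(w^\top(x-y))]$ with $\Phi_\nu$ via Bochner duality (the paper phrases this through the Rahimi--Recht cosine-feature identity rather than the product-to-sum formula, an immaterial difference). The factor of $\tfrac12$ you flag is handled in the paper simply by defining the network with a $\sqrt{2/m}$ output normalization instead of $m^{-1/2}$, so your remark that it is an absorbable normalization is exactly right.
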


A proof can be found in Appendix \ref{ap:sec:proof_matern_siren_ntk}. 
This result establishes for the first time a connection between \textsc{Siren}s and kernel methods.
While explicitly shown for \Matern kernels, as we argue in the Appendix \ref{ap:sec:proof_matern_siren_ntk}, the arguments of Theorem \ref{thm:matern_siren_ntk} in fact apply to all stationary kernels, making it a powerful tool to study the connection between widely used \textsc{Siren}s and any stationary kernel function.

\textbf{\Matern kernels vs. arc-cosine kernels.}
Lastly, we aim to compare \Matern kernels with the previously used first-order arc-cosine kernel.
Our analysis is based on:
\begin{theorem}[\cite{chen2021}, Theorem 1]\label{thm:rkhs_matern_laplace}
    When restricted to the hypersphere $\mathbb{S}^{d-1}$, the RKHS of the \Matern kernel with smoothness $\nu=1/2$ (the Laplace kernel) include the same set of functions as the RKHS induced by the NTK of a fully-connected ($L\geq 2$)-layer ReLU network.
\end{theorem}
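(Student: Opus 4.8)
The plan is to reduce the statement to an eigenvalue-decay comparison in a common Mercer basis. First I would observe that, restricted to $\mathbb{S}^{d-1}$, both kernels are \emph{zonal} (dot-product) kernels: for the Laplace kernel this is because $\Vert x-y\Vert^2 = 2-2\langle x,y\rangle$ for unit vectors, so $\exp(-\Vert x-y\Vert/h)$ is a function of $\langle x,y\rangle$ alone; for the NTK it is because the NTK of a depth-$L$ fully-connected ReLU network is a finite sum of products of compositions of the zeroth- and first-order arc-cosine kernels (the latter being $k_{\mathrm{AC}}$ of Eq.~(\ref{eq:def_arc_cosine})), each of which depends only on $\langle x,y\rangle$ on the sphere. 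Both therefore admit Mercer expansions $k(x,y) = \sum_{j\geq 0}\mu_j\sum_\ell Y_{j,\ell}(x)Y_{j,\ell}(y)$ in spherical harmonics, and their RKHSs are $\{f : \sum_j \mu_j^{-1}\Vert\Pi_j f\Vert^2 < \infty\}$, where $\Pi_j$ projects onto degree-$j$ harmonics. Two such kernels induce the \emph{same} set of functions (with equivalent norms) precisely when their eigenvalue sequences have the same zero set and are of the same order, $\mu_j^{(1)} = \Theta(\mu_j^{(2)})$ on their support.

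Second, I would pin down the two decay rates. For the Laplace kernel, Eq.~(\ref{eq:rkhs_norm}) with $\nu = 1/2$ identifies its RKHS on $\mathbb{R}^d$ with the Sobolev space $H^{(d+1)/2}(\mathbb{R}^d)$; taking the trace onto the codimension-one sphere gives $H^{d/2}(\mathbb{S}^{d-1})$, equivalently zonal eigenvalues $\mu_j = \Theta(j^{-d})$. This is consistent with the profile $t\mapsto\exp(-\sqrt{2-2t}/h) = 1 - (\sqrt{2}/h)(1-t)^{1/2} + \dots$ having a leading $(1-t)^{1/2}$ singularity at the endpoint $t = 1$, since (via Funk--Hecke) the order of the endpoint singularity of a zonal profile controls the polynomial order of its harmonic coefficients. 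For the NTK I would carry out the corresponding endpoint analysis on the standard layerwise recursion: the iterated first-order arc-cosine kernels stay smooth enough to contribute only a subdominant $(1-t)^{3/2}$ term, whereas the ``derivative'' factors (compositions of the zeroth-order arc-cosine kernel, which behaves like $1 - c(1-t)^{1/2}$ near $t=1$) inject a leading $(1-t)^{1/2}$ singularity that survives in every product and at every depth $L\geq 2$. Hence the depth-$L$ NTK has the same leading endpoint behavior, thus $\mu_j^{\mathrm{NTK}} = \Theta(j^{-d}) = \Theta(\mu_j^{\mathrm{Laplace}})$, and the Mercer characterization above gives equality of the two function spaces.

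The main obstacle is making the NTK endpoint analysis uniform in the depth $L$: one needs an induction over layers showing that composing and multiplying the arc-cosine kernels neither raises the \emph{order} of the $(1-t)^{1/2}$ singularity nor cancels it, with all remainder terms controlled by constants that may depend on $L$ but not on $j$. A secondary but genuine subtlety is a parity phenomenon of the ReLU activation, which makes the bias-free ReLU NTK have $\mu_j = 0$ for odd $j\geq 3$ while the Laplace kernel is strictly positive definite; to obtain \emph{exactly} the same set of functions one therefore works with networks that include bias units (so that all $\mu_j > 0$), after which the ``same zero set'' hypothesis holds and the norm-equivalence argument closes the proof.
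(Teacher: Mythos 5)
Note first that the paper does not prove Theorem~\ref{thm:rkhs_matern_laplace} at all: it is imported verbatim from \cite{chen2021}, so the only meaningful comparison is with that source, not with anything in this manuscript. Your outline follows essentially the same route as Chen and Xu: view both kernels as zonal kernels on $\mathbb{S}^{d-1}$, expand in spherical harmonics, show that the degree-$j$ eigenvalues of the Laplace kernel and of the deep ReLU NTK both decay as $\Theta(j^{-d})$ (the Laplace rate via its Sobolev/trace description or the $(1-t)^{1/2}$ endpoint singularity of its zonal profile, the NTK rate by propagating the $(1-t)^{1/2}$ singularity of the zeroth-order arc-cosine factor through the layerwise recursion), and conclude set equality of the two RKHSs from the two-sided eigenvalue comparison, which is the standard Mercer characterization and is sound since set equality and norm equivalence coincide here. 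The two points you flag as obstacles --- making the endpoint analysis uniform in the depth $L$ (and at both endpoints $t=\pm 1$, where the parity bookkeeping lives), and the degeneracy of the bias-free two-layer NTK, whose eigenvalues vanish at odd degrees $\geq 3$ so that bias units are needed for both kernels to have full spectral support --- are precisely the technical content of \cite{chen2021}; as written, your argument is a correct plan rather than a complete proof, but it is the right plan and mirrors the structure of the cited proof.
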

In other words, Theorem \ref{thm:rkhs_matern_laplace} shows that the RKHS of the Laplace kernel (a \Matern kernel with smoothness $\nu=1/2$) is the same as the RKHS induced by the NTK of a fully-connected ReLU network when inputs are restricted to $\mathbb{S}^{d-1}$.
Based on Theorem~\ref{thm:rkhs_matern_laplace}, we obtain the following connection between Laplace and the first-order arc-cosine kernel:
\begin{restatable}{corollary}{corRkhsMaternArcCosine}\label{cor:rkhs_matern_arc-cosine}
    The RKHS of the Laplace kernel is equal to the RKHS induced by a sum of zeroth and first-order arc-cosine kernels, implying that, when restricted to $\mathbb{S}^{d-1}$, the Laplace and first-order arc-cosine kernel are equal up to constant scaling and addition of another kernel function.
\end{restatable}
See Appendix \ref{ap:sec:proof_rkhs_matern_arc-cosine} for a proof. 
For general input domains, our empirical results presented next support (at least for ($L=2$)-layer networks) the widely believed claim that the NTK is not significantly different from standard kernels \citep{belkin2018,hui2019,geifman2020}, such as the Laplace kernel.
Indeed, we show that the Laplace kernel even outperforms the arc-cosine kernel.

\section{Experiments and Results}
We systematically evaluated the effectiveness of \Matern kernels in the context of implicit surface reconstruction, presenting results on ShapeNet~\citep{chang2015} and the \textit{Surface Reconstruction Benchmark} (SRB;~\citet{berger2013}) in Section \ref{subsec:results_shapenet}. 
Moreover, Section \ref{subsec:texture_recon} demonstrates \Matern kernels' ability to reconstruct high-frequency textures on the \textit{Google Scanned Objects} (GSO; \cite{downs2022}) dataset and Objaverse \citep{deitke2023}.
Leveraging Neural Kernel Fields~\citep{williams2022}, we present and evaluate learnable \Matern kernels in Section \ref{subsec:learnable_kernels}.

\subsection{Surface Reconstruction on ShapeNet and SRB}
\label{subsec:results_shapenet}
\textbf{ShapeNet}. We compare \Matern kernels in a sparse setting against the arc-cosine kernel on ShapeNet (using train/val/test split provided by~\citet{mescheder2019}).
To do so, we randomly sample $m=1,000$ surface points with corresponding normals for each shape.
We implemented \Matern kernels in PyTorch and took the official CUDA implementation of the arc-cosine kernel from NS, eventually integrated into a \textit{unified} framework to ensure fair comparison.
Notably, we did not use Nyström sampling as in NS and employed PyTorch's direct Cholesky solver to solve the KRR problem in Eq. (\ref{eq:emp_risk}) instead of an iterative conjugate gradient solver. 
Moreover, similar to NS, we approximate the gradient part in Eq. (\ref{eq:emp_risk}) with finite differences.
See Appendix \ref{ap:sec:shapenet_non_learnable_details} for details.

\begin{table}
\begin{adjustbox}{width=\linewidth}
\renewcommand{\arraystretch}{1.017}
\begin{tabular}{lccc|ccc}
\toprule
& \multicolumn{3}{c|}{F-Score $\uparrow$} & \multicolumn{3}{c}{CD $\downarrow$} \\
\cmidrule(lr){2-4} \cmidrule(lr){4-7}
& 0.5 & 1 & 2 & 0.5 & 1 & 2 \\
\midrule
\Matern 1/2 & 93.6 & 93.7 & 93.7 & 4.05 & \underline{4.02} & \underline{4.02} \\
\Matern 3/2 & 94.6 & \underline{94.8} & \textbf{94.9} & 4.05 & \textbf{4.00} & 4.09 \\
\Matern 5/2 & 92.4 & 93.8 & 92.9 & 6.42 & 5.65 & 6.91 \\
\Matern $\infty$ & 59.3 & 52.4 & 40.9 & 57.56 & 50.59 & 48.38 \\
\midrule
Arc-cosine & \multicolumn{3}{c|}{92.8} & \multicolumn{3}{c}{4.67} \\
\midrule
NS & \multicolumn{3}{c|}{90.6} & \multicolumn{3}{c}{4.74} \\
SPSR & \multicolumn{3}{c|}{84.3} & \multicolumn{3}{c}{6.26} \\
\bottomrule
\end{tabular}
\renewcommand{\arraystretch}{1} 
\hspace{0.4cm}
\resizebox{!}{\heightof{
\begin{tabular}{lccc|ccc}
\toprule
& \multicolumn{3}{c|}{F-Score $\uparrow$} & \multicolumn{3}{c}{CD $\downarrow$} \\
\cmidrule(lr){2-4} \cmidrule(lr){4-7}
& 0.5 & 1 & 2 & 0.5 & 1 & 2 \\
\midrule
\Matern 1/2 & 93.6 & 93.7 & 93.7 & 4.05 & \underline{4.02} & \underline{4.02} \\
\Matern 3/2 & 94.6 & \underline{94.8} & \textbf{94.9} & 4.05 & \textbf{4.00} & 4.09 \\
\Matern 5/2 & 92.4 & 93.8 & 92.9 & 6.42 & 5.65 & 6.91 \\
\Matern $\infty$ & 59.3 & 52.4 & 40.9 & 57.56 & 50.59 & 48.38 \\
\midrule
Arc-cosine & \multicolumn{3}{c|}{92.8} & \multicolumn{3}{c}{4.67} \\
\midrule
NS & \multicolumn{3}{c|}{90.6} & \multicolumn{3}{c}{4.74} \\
SPSR & \multicolumn{3}{c|}{84.3} & \multicolumn{3}{c}{6.26} \\
\bottomrule
\end{tabular}
}}{%
\begin{tabular}{lcc}
\toprule
& CD $\downarrow$ & HD $\downarrow$ \\
\midrule
\Matern 1/2 & 0.21 & 4.43 \\
\Matern 3/2 & \underline{0.18} & \underline{2.93} \\
\Matern 5/2 & 0.58 & 22.52 \\
\Matern $\infty$ & 3.83 & 33.27 \\
\midrule
NS* & 0.19 & 3.19 \\
\midrule
NS & \textbf{0.17} & \textbf{2.85} \\
SPSR & 0.21 & 4.69 \\
FFN & 0.28 & 4.45 \\
\textsc{Siren} & 0.19 & 3.86 \\
SAP & 0.21 & 4.85 \\
DiGS & \underline{0.18} & 3.55 \\
VisCo & \underline{0.18} & 2.95 \\
OG-INR & 0.20 & 4.06 \\
\bottomrule
\end{tabular}
}
\end{adjustbox}
\caption{\textbf{Results on ShapeNet (left) and SRB (right) for non-learnable kernels}. Arc-cosine kernel and \Matern $\nu\in\{\text{1/2},\text{3/2},\text{5/2},\infty\}$ are implemented in a unified framework and based on the exact same parameters. "NS*" denotes the best result we could achieve by running the official implementation of NS, see Appendix \ref{ap:sec:details_srb}. \textbf{Bold} marks best result, \underline{underline} second best.}
\label{tab:results_non_learnable}
\end{table}

Quantitative results in terms of F-Score (with a threshold of 0.01) and Chamfer distance (CD; always reported $\times$ 10$^3$) can be found in Table \ref{tab:results_non_learnable}.
\Matern 1/2 and 3/2 consistently outperform the arc-cosine kernel as well as popular SPSR~\citep{kazhdan2013}.
In addition, our experiments show that the bandwidth parameter, $h$, can be used to tune the surface reconstruction (lower the error), while not being too sensitive in practice. 
\Matern 5/2 and the Gaussian kernel (for $\nu\rightarrow\infty$) perform significantly worse, being simply too smooth.
Figure~\ref{fig:recons_ns_matern} and Appendix \ref{ap:sec:shapenet_non_learnable_details} shows qualitative results.

\begin{wraptable}{r}{3.7cm}
\vspace{-0.41cm}
\centering
\begin{tabular}{ll}
\toprule
& Time\\
\midrule
\Matern 1/2 & \underline{9.85} \\
\Matern 3/2 & 12.47\\
\Matern 5/2 & 15.26 \\
\Matern $\infty$ & 12.76 \\
\midrule
NS* & 19.89 \\
\midrule
NS & 11.91 \\
SPSR & \textbf{1.65} \\
\bottomrule
\end{tabular}
\caption{\textbf{Runtime comparison on SRB}. Time in seconds.}
\label{tab:runtime_srb}
 \end{wraptable}

\textbf{SRB}. Next, we evaluate \Matern kernels on the challenging Surface Reconstruction Benchmark which consists of five complex shapes simulated from incomplete and noisy range scans with up to 100,000 points.
For this, we implemented a highly scalable version of \Matern kernels, closely following the NS framework to solve the KRR problem in Eq. (\ref{eq:emp_risk}) using Nyström sampling (with 15,000 samples) and the FALKON~\citep{rudi2017} solver.
We compare against NS and SPSR, as well as kernel-free methods, Fourier Feature Networks (FFNs; \cite{tancik2020}), \textsc{Siren}, SAP \citep{peng2021}, DiGS \citep{shabat2022}, VisCo \citep{pumarola2022}, and OG-INR \citep{koneputugodage2023}.
All methods are optimization-based (FFN and \textsc{Siren} are overfitted to a single shape) and use surface normals.
For \Matern kernels, we did a modest parameter sweep over $h\in\{0.5,1,2\}$ and took the reconstruction with the lowest Chamfer distance.
Runtime is measured on a single NVIDIA V100.

\begin{figure}
    \centering
    \includegraphics[width=1\linewidth]{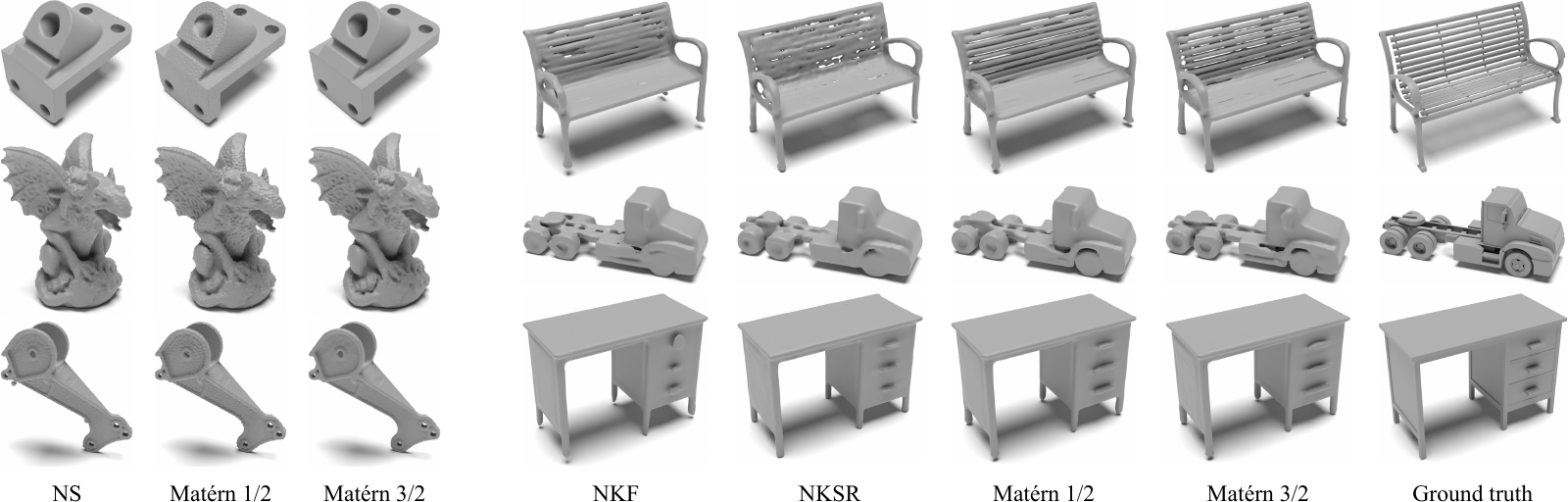}
    \caption{\textbf{Qualitative results for non-learnable kernels on SRB (left) and learned kernels on ShapeNet (right)}. Please see Appendices \ref{ap:sec:details_srb} and \ref{ap:sec:shapenet_learnable_additional_results} for more qualitative results.}
    \label{fig:qualitative_results}
\end{figure}

Quantitative results obtained by employing Chamfer and Hausdorff distance (HD) can be found in Table~\ref{tab:results_non_learnable}, demonstrating that \Matern 3/2 performs on par with NS while being significantly faster to compute (compared to what \textit{we} measured for NS), see Table \ref{tab:runtime_srb}.
Moreover, \Matern 3/2 outperforms all evaluated state-of-the-art kernel-free methods (FFN, \textsc{Siren}, SAP, DiGS, VisCo, and OG-INR).
Figure \ref{fig:qualitative_results} (left) and Appendix \ref{ap:sec:details_srb} provide more qualitative results and per-object metrics.

\subsection{Texture Reconstruction on GSO and Objaverse}
\label{subsec:texture_recon}
\begin{wraptable}{r}{5.8cm}
\vspace{-0.35cm}
\begin{adjustbox}{width=\linewidth}
\begin{tabular}{llcc}
\toprule
$m$ & Kernel & PSNR $\uparrow$ & LPIPS $\downarrow$ \\
\midrule
\multirow{2}{*}{10,000} & \Matern 3/2 & \textbf{19.61} & \textbf{2.05} \\ & Arc-cosine & 19.34 & 2.07 \\
\midrule
\multirow{2}{*}{2,500} & \Matern 3/2 & \textbf{18.92} & 2.15 \\ & Arc-cosine & 18.88 & \textbf{2.09} \\
\bottomrule
\end{tabular} 
\end{adjustbox}
\caption{\textbf{Texture reconstruction on GSO}. \Matern kernels outperform the arc-cosine kernel in dense and sparse settings.}
\label{tab:quant_texture_eval}
\end{wraptable}
Lastly, we demonstrate \Matern kernels' ability to represent other \textit{high-frequency} scene attributes, such as texture.
To do so, we randomly sample $m$ surface points with corresponding normals and per-point RGB color values from textured meshes, yielding an extended dataset $\mathcal{D}'=\mathcal{D}\times\{c_1,c_2,\dots,c_m\}\subset\Omega\times\mathbb{R}^d\times\mathbb{R}^3$ for each shape.
Then, instead of modeling an SDF $f$ as in Eq. (\ref{eq:emp_risk}), we are seeking a function $f':\mathbb{R}^d\longrightarrow\mathbb{R}^4$ that, along with signed distances, also predicts per-point RGB values.
Please find more information in Appendix \ref{ap:sec:texture_recon} about how we adapt the KRR problem in Eq. (\ref{eq:emp_risk}) to estimate $\hat{f}'$.
Finally, we extract the object's surface using Marching Cubes and trilinearily interpolate RGB values at previously predicted surface points. 
\begin{wrapfigure}[15]{r}{0.58\textwidth}
    \vspace{-0.48cm}
    \includegraphics[width=0.58\textwidth]{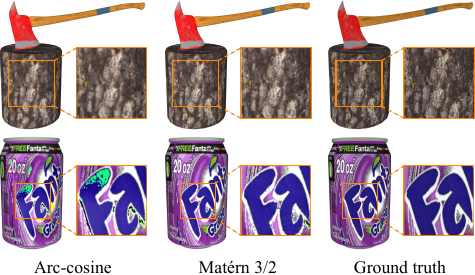}
    \caption{\textbf{Texture reconstruction on Objaverse}. \Matern kernels lead to fewer artifacts and sharper details than the arc-cosine kernel.}
    \label{fig:texture_recon}
\end{wrapfigure}
We present quantitative results in terms of PSNR and LPIPS on GSO in Table \ref{tab:quant_texture_eval} (metrics evaluated on the texture atlas; see Appendix \ref{ap:sec:texture_recon} for further details).
\Matern 3/2 surpasses the arc-cosine kernel in the densely sampled setting using 10,000 surface points, as well as in the sparse setting.
Notably, we did not tune \Matern kernels' bandwidth parameter for this experiment. 

We also show qualitative results on the challenging Objaverse \citep{deitke2023} dataset in Figure \ref{fig:texture_recon}.
We chose Objaverse as it includes extremely high-resolution and complex textures.
\Matern 3/2 reconstructs high-frequency texture details with great precision, overall yielding visually more pleasant reconstructions than the arc-cosine kernel.
Reconstructed textures have fewer artifacts and are generally sharper, demonstrating that \Matern kernels can overcome spectral bias much better than the arc-cosine kernel. 
Note that this perfectly confirms our theoretical analysis presented in Section \ref{subsec:analysis_edr}. 

\subsection{Data-dependent \Matern Kernels}
\label{subsec:learnable_kernels}
We leverage the \textit{Neural Kernel Field} (NKF) framework introduced by \citet{williams2022} to make \Matern kernels learnable.
Specifically, NKF proposed to feed points $x,y\in\mathbb{R}^d$ through an input-conditioned neural network $\varphi:\mathbb{R}^d\longrightarrow\mathbb{R}^q$ before evaluating a kernel function, $k^\varphi(x,y;\mathcal{D})=k\left([x,\varphi(x;\mathcal{D})],[y,\varphi(y;\mathcal{D})]\right)$.
We re-implemented the NKF framework since the authors did not provide code.
We set $q=32$ for our experiments.
Please see \cite{williams2022} for details.

\textbf{Sparse surface reconstruction and extreme generalization.}
We compare learned \Matern kernels against the original NKF framework (with the arc-cosine kernel) and NKSR~\citep{huang2023} in a sparse setting on ShapeNet.
Again, we sample $m=1,000$ surface points and corresponding normals for each shape and set $h=1$ for all \Matern kernels.
Additionally, we evaluate \Matern kernels' out-of-category generalization ability in an extreme setting, in which we train only on chairs and evaluate on the other 12 ShapeNet categories.
Table \ref{tab:shapenet_results} reports the results, quantified using intersection-over-union (IoU), Chamfer distance, and normal consistency (NC).
Learned \Matern 1/2 and 3/2 outperform the arc-cosine kernel while being significantly faster to train. 
Training NKF on the entire ShapeNet dataset (consisting of approx. 30,000 shapes) takes about six days---almost twice as long as for \Matern kernels, which require about three days (measured on a single NVIDIA A100).
Additionally, although \Matern 1/2 is not able to surpass NKSR, it comes very close (95.3 vs. 95.6 in IoU) while having a shorter training time.
Regarding out-of-category generalization, we observe that \Matern 1/2 performs best, followed by \Matern 3/2, NKF, and NKSR.
Please see Figure \ref{fig:qualitative_results} (right) and Appendix \ref{ap:sec:shapenet_learnable_additional_results} for qualitative results.

\begin{table}
\begin{adjustbox}{width=\linewidth}
{\renewcommand{\arraystretch}{1.05}%
\begin{tabular}[t]{lccc}
\toprule
& IoU $\uparrow$ & CD $\downarrow$ & NC $\uparrow$ \\
\midrule
\Matern 1/2 & \underline{95.3} & \underline{2.58} & \textbf{95.6} \\
\Matern 3/2 & 94.9 & 2.70 & 95.3 \\
\Matern 5/2 & 93.3 & 3.07 & 94.9 \\
\Matern $\infty$ & 92.1 & 3.39 & 94.2 \\
\midrule
NKF* & 94.7 & 2.70 & 95.2 \\
\midrule
NKF & 94.7 & 2.65 & \underline{95.4} \\
NKSR & \textbf{95.6} & \textbf{2.34} & \underline{95.4} \\
\bottomrule
\end{tabular}}
\hspace{0.5cm}
\begin{tabular}[t]{lccc||c}
\toprule
\multicolumn{5}{c}{\textbf{Train only on chairs, test on all}} \\
\cmidrule(lr){1-5}
& IoU $\uparrow$ & CD $\downarrow$ & NC $\uparrow$ & Time/epoch \\
\midrule
\Matern 1/2 &\textbf{93.4} & \underline{3.06} & \textbf{94.3} & \textbf{7.71 min} \\
\Matern 3/2 & \underline{92.8} & 3.34 & \underline{94.2} & 8.26 min \\
\Matern 5/2 & 90.5 & 4.11 & 93.9 & 8.56 min \\
\Matern $\infty$ & 84.7 & 6.70 & 92.6 & \underline{7.74 min} \\
\midrule
NKF* & \underline{92.8} & 3.30 & 94.1 & 31.98 min \\
\midrule
NKSR & 89.6 & \textbf{2.70} & 94.1 & 41.66 min \\
\bottomrule
\end{tabular}
\end{adjustbox}
\caption{\textbf{Results on ShapeNet for learned kernels}. "NKF*" denotes a re-implemented variant of NKF. NKF* and \Matern $\nu\in\{\text{1/2},\text{3/2},\text{5/2},\infty\}$ are based on the same framework and parameters; they differ \textit{only} in the employed kernel. Runtime is measured on a single NVIDIA A100 with a batch size of one to ensure fair comparison. \textbf{Bold} marks best result, \underline{underline} second best.}
\label{tab:shapenet_results}
\end{table}

\begin{table}
\begin{adjustbox}{width=\linewidth}
\begin{tabular}{lccc|ccc||ccc|ccc||ccc|ccc}
\toprule
& \multicolumn{6}{c||}{No noise ($\sigma=0$)} & \multicolumn{6}{c||}{Small noise ($\sigma=0.0025$)} & \multicolumn{6}{|c}{Big noise ($\sigma=0.005$)} \\
\cmidrule(lr){2-19}
& \multicolumn{3}{c|}{IoU $\uparrow$} & \multicolumn{3}{c||}{CD $\downarrow$} & \multicolumn{3}{c|}{IoU $\uparrow$} & \multicolumn{3}{c||}{CD $\downarrow$} & \multicolumn{3}{c|}{IoU $\uparrow$} & \multicolumn{3}{c}{CD $\downarrow$} \\
\cmidrule(lr){2-19} 
& 0.5 & 1 & 2 & 0.5 & 1 & 2 & 0.5 & 1 & 2 & 0.5 & 1 & 2 & 0.5 & 1 & 2 & 0.5 & 1 & 2 \\
\midrule
\Matern 1/2 & 93.3 & \textbf{93.6} & \underline{93.5} & 2.88 & \underline{2.85} & 2.87 & 91.9 & \textbf{92.1} & \textbf{92.1} & \underline{3.03} & 3.04 & 3.07 & \underline{89.5} & \textbf{89.6} & \textbf{89.6} & \underline{3.37} & \textbf{3.36} & 3.39 \\
\Matern 3/2 & 92.7 & 92.7 & 92.3 & 3.08 & 3.20 & 3.25 & 89.7 & 89.8 & 89.7 & 3.48 & 3.69 & 3.54 & 85.8 & 85.5 & 86.3 & 4.23 & 4.32 & 4.15 \\
\Matern 5/2 & 91.8 & 90.5 & 88.8 & 3.58 & 3.69 & 4.18 & 87.9 & 87.6 & 87.0 & 4.29 & 4.48 & 4.43 & 83.4 & 82.2 & 84.3 & 4.90 & 5.03 & 5.11 \\
\Matern $\infty$ & 86.4 & 85.8 & 86.8 & 5.80 & 5.07 & 4.75 & 81.3 & 83.3 & 81.0 & 7.27 & 5.62 & 6.45 & 76.7 & 80.7 & 79.3 & 8.27 & 5.90 & 6.67 \\
\midrule
NKF* & \multicolumn{3}{c|}{93.2} & \multicolumn{3}{c||}{2.97} & \multicolumn{3}{c|}{\underline{92.0}} & \multicolumn{3}{c||}{3.12} & \multicolumn{3}{c|}{\underline{89.5}} & \multicolumn{3}{c}{3.39} \\
\midrule
NKSR & \multicolumn{3}{c|}{91.1} & \multicolumn{3}{c||}{\textbf{2.65}} & \multicolumn{3}{c|}{90.1} & \multicolumn{3}{c||}{\textbf{2.98}} & \multicolumn{3}{c|}{88.4} & \multicolumn{3}{c}{3.41} \\
\bottomrule
\end{tabular}
\end{adjustbox}
\caption{\textbf{Robustness against noise}. We compare \Matern kernels' robustness against noise on a subset of the ShapeNet dataset using different noise levels, $\sigma\in\{0,0.0025,0.005\}$ and bandwidths, $h\in\{0.5,1,2\}$. \textbf{Bold} marks best result, \underline{underline} second best.}
\label{tab:robsustness_noise}
\end{table}

\textbf{Robustness against noise.}
We evaluate \Matern kernel's robustness against different noise levels, $\sigma\in\{0,0.0025,0.005\}$, on a subset of the ShapeNet dataset which includes approximately 1,700 shapes. 
To construct the dataset, we downsampled each ShapeNet category to include only 5\% of the shapes.
Table \ref{tab:robsustness_noise} presents the results.
NKSR, being specifically optimized to deal with noisy inputs, degrades the least with increasing noise level (3\% in IoU from no to big noise versus 4.3\% for \Matern 1/2).
Moreover, \Matern 1/2 is slightly more robust against noise than NKF. 
Generally, varying the bandwidth, $h$, helps increase the robustness against noise. 

\begin{wrapfigure}[9]{r}{0.3\textwidth}
\vspace{-0.6cm}
  \begin{center}
   \includegraphics[width=0.3\textwidth]{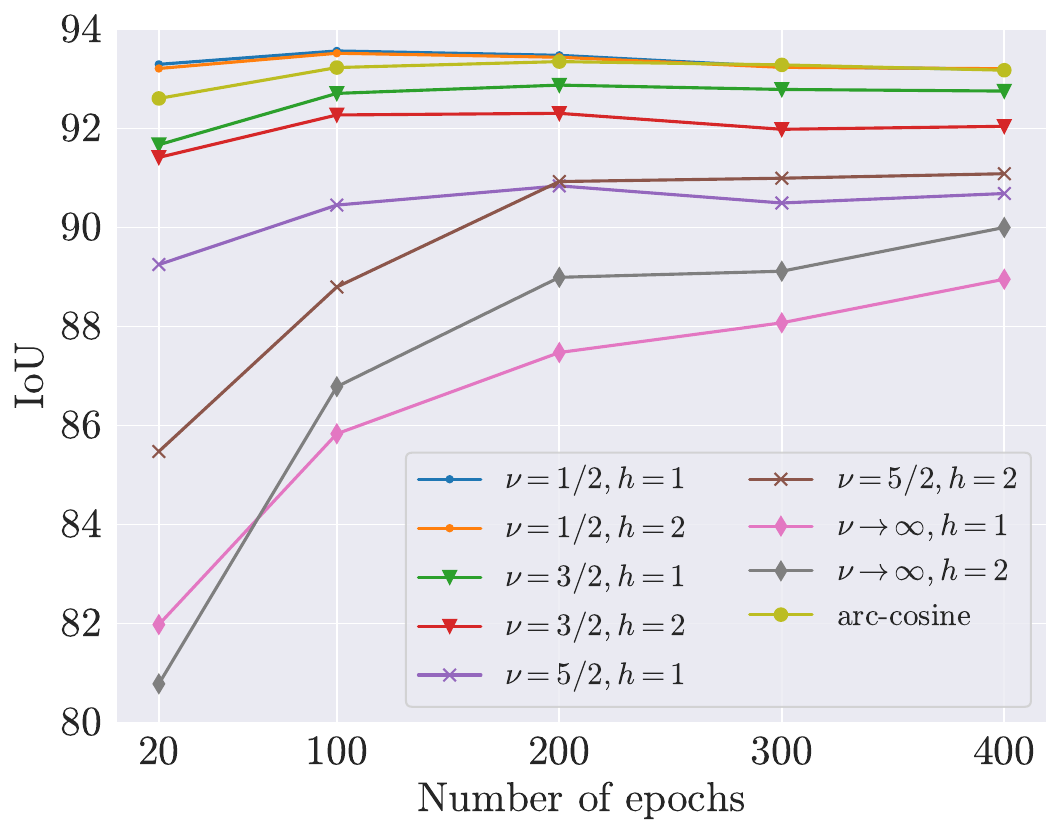}
  \end{center}
\end{wrapfigure}
\textbf{Convergence speed.}
Finally, we analyze \Matern kernels' convergence speed in comparison to the arc-cosine kernel as implemented in NKF, see inset figure on the right.
As exemplified, \Matern 1/2 already converges after just 100 epochs, while the arc-cosine kernel requires twice as many epochs, also never being able to reach the same top accuracy.
In general, we see that the smoother the kernel, the slower the convergence---an observation that perfectly matches our theoretical analysis in Section~\ref{subsec:analysis_edr}.
\Matern 1/2 converges the fastest, followed by \Matern 3/2, 5/2, and the Gaussian kernel.

\section{Conclusion}
In this work, we have proposed to use the family of \Matern kernels for implicit surface reconstruction and showed that it consistently outperforms the recently employed first-order arc-cosine kernel---both, in a non-learnable as well as learnable regime---while being significantly easier to implement, faster to compute and train, and highly scalable.
We demonstrated that \Matern kernels lead to \textit{tunable} surface reconstruction, and, based on a new bound of the $L_2$ reconstruction error, derive practical insights into how to choose their tunable bandwidth parameter.
Moreover, we presented an in-depth theoretical analysis, analyzing \Matern kernels' relation to widely used Fourier feature mappings, \textsc{Siren} networks, and arc-cosine kernels.

\subsubsection*{Acknowledgments}
We thank Francis Williams for valuable discussions and his support in re-implementing the NKF framework and Lukas Meyer for his help during NKSR evaluation.
This work was funded by the German Federal Ministry of Education and Research (BMBF), FKZ: 01IS22082 (IRRW). 
The authors are responsible for the content of this publication.
The authors gratefully acknowledge the scientific support and HPC resources provided by the Erlangen National High Performance Computing Center (NHR@FAU) of the Friedrich-Alexander-Universität Erlangen-Nürnberg (FAU) under the NHR project b112dc IRRW. NHR funding is provided by federal and Bavarian state authorities. NHR@FAU hardware is partially funded by the German Research Foundation (DFG) – 440719683.

\bibliography{iclr2025_conference}
\bibliographystyle{iclr2025_conference}

\clearpage
\appendix

\section{On the Stationarity of \Matern Kernels}
\label{ap:sec:translation_matern}
As noted in Section \ref{subsec:basic_props} of the main paper, \Matern kernels are \textit{stationary}: they only depend on the difference $x-y$ between two points $x,y\in\Omega$, hence being translation invariant.
This is because $k(x-y)=k(x+t-(y+t))$ for $t\in\mathbb{R}^d$ and all stationary kernels, $k$.
In contrast, the arc-cosine kernel is not translation invariant as its value depends on the absolute positions, $x,y$, see Eq. (\ref{eq:def_arc_cosine}) of the main paper (\textit{i.e.}, the arc-cosine kernel can not be written as a function of $x-y$).
In fact, the arc-cosine kernel is \textit{non-stationary}.
In the following and with the help of Figure \ref{fig:translation_invariance}, we will further discuss why stationary kernels, such as \Matern kernels, might be a better choice for kernel-based implicit surface reconstruction than non-stationary kernels, such as the arc-cosine kernel.

First, stationary kernels are \textit{naturally} translation invariant. 
Consequently, they are independent of the absolute embedding of the input points---translating them does not change the reconstructed surface as the kernel's value only depends on the \textit{relative} distances between points. 
This is highly beneficial from a practical perspective since no additional steps (such as centering the point cloud) have to be taken before surface reconstruction. 
Being non-stationary, the arc-cosine kernel behaves differently.
If we would just use the kernel as in Eq. (\ref{eq:def_arc_cosine}) of the main paper, the reconstructed surface would vary depending on the actual location of the input points in space. 
This is a highly undesirable property when it comes to surface reconstruction. 
To make the arc-cosine kernel translation invariant, input points must be centered before reconstruction (thus eliminating the effects of translation).
A centered arc-cosine kernel, denoted as $\Bar{k}_\text{AC}$, reads:
\begin{equation*}
    \Bar{k}_\text{AC}(x,y)=\frac{\Vert x-\Bar{c}\Vert\Vert y-\Bar{c}\Vert}{\pi}\left(\sin\Bar{\theta}+(\pi-\Bar{\theta})\cos\Bar{\theta}\right),\quad\text{where}\quad\Bar{\theta}=\cos^{-1}\left(\frac{(x-\Bar{c})^\top(y-\Bar{c})}{\Vert x-\Bar{c}\Vert\Vert y-\Bar{c}\Vert}\right).
\end{equation*}
Here, $c\in\mathbb{R}^d$ denotes the centroid of the input point cloud. 
$\Bar{k}_\text{AC}$ is obviously translation invariant because $\Vert x+t-(\Bar{c}+t)\Vert \Vert y+t-(\Bar{c}+t)\Vert=\Vert x-\Bar{c}\Vert\Vert y-\Bar{c}\Vert$ and $(x+t-(\Bar{c}+t))^\top(y+t-(\Bar{c}+t))=(x-\Bar{c})^\top(y-\Bar{c})$ for all $t\in\mathbb{R}^d$.
This follows because translating the input points also shifts its centroid.
In conclusion, while stationary \Matern kernels can be directly used for kernel-based surface reconstruction, additional pre-processing steps are necessary for the arc-cosine kernel.
\begin{figure}[b!]
    \includegraphics[width=\linewidth]{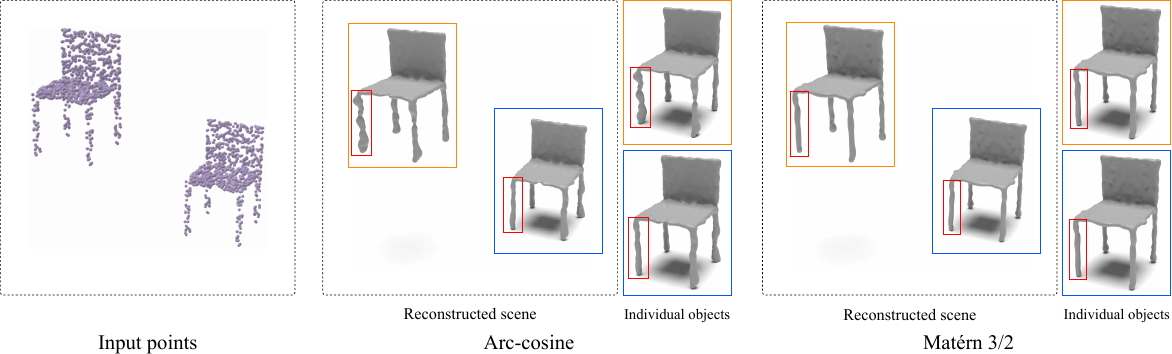}
    \caption{\textbf{\Matern kernels are translation invariant and lead to locally consistent surface reconstructions}. Here, we show that two identical chairs in a scene are reconstructed differently depending on the absolute position of the input points when using the non-stationary arc-cosine kernel (especially noticeable at the chair's legs). In contrast, stationary \Matern kernels lead to locally consistent surface reconstructions, being independent of the actual embedding (\textit{i.e.}, both chairs have the exact same reconstructed surface).}
    \label{fig:translation_invariance}
\end{figure}

Second, stationary kernels lead to \textit{locally consistent} surface reconstructions.
Since a stationary kernel’s value only depends on the \textit{relative} positions, objects in a scene (or parts of objects) with similar geometric properties will be reconstructed consistently as long as the relative distances between points on each instance (or part) remain the same. 
This is a crucial and very desirable property for surface reconstruction which is \textit{not} shared by the arc-cosine kernel (because it is non-stationary).

\section{Proof of Proposition \ref{lemma:bound_norm}}
\label{ap:sec:proof_bound_norm}

\lemmaBoundNorm*

\begin{proof}
    Based on a simple bound
    \begin{align*}
        I&=\int_{\mathbb{R}^d}\left(\frac{2\nu}{h^2}+\left(2\pi\Vert\omega\Vert\right)^2\right)^{\nu+d/2}|\mathcal{F}[f](\omega)|^2\mathrm{d}\omega \\
        &\leq\int_{\mathbb{R}^d}\left(\frac{2\nu}{h^2}\right)^{\nu+d/2}|\mathcal{F}[f](\omega)|^2\mathrm{d}\omega+\int_{\mathbb{R}^d}\left(\left(2\pi\Vert\omega\Vert\right)^2\right)^{\nu+d/2}|\mathcal{F}[f](\omega)|^2\mathrm{d}\omega \\
        &=\left(\frac{2\nu}{h^2}\right)^{\nu+d/2}\underbrace{\int_{\mathbb{R}^d}|\mathcal{F}[f](\omega)|^2\mathrm{d}\omega}_{=:C'(\mathcal{F}[f])}+\underbrace{\int_{\mathbb{R}^d}\left(\left(2\pi\Vert\omega\Vert\right)^2\right)^{\nu+d/2}|\mathcal{F}[f](\omega)|^2\mathrm{d}\omega}_{=:C'_{d,\nu}(\mathcal{F}[f])} \\
        &=\left(\frac{2\nu}{h^2}\right)^{\nu+d/2}C'(\mathcal{F}[f])+C'_{d,\nu}(\mathcal{F}[f])
    \end{align*}
    we get
    \begin{align*}
        \Vert f\Vert_{H_\nu}^2&=h^{2\nu}\left((2\pi)^{d/2}C_{d,\nu}\right)^{-1}I \\
        &\leq h^{2\nu}\left((2\pi)^{d/2}C_{d,\nu}\right)^{-1}\left(\left(\frac{2\nu}{h^2}\right)^{\nu+d/2}C'(\mathcal{F}[f])+C'_{d,\nu}(\mathcal{F}[f])\right) \\
        &=h^{2\nu}\left(\frac{1}{h^{2\nu+d}}\underbrace{(2\nu)^{\nu+d/2}\left((2\pi)^{d/2}C_{d,\nu}\right)^{-1}C'(\mathcal{F}[f])}_{=:C^1_{d,\nu}(\mathcal{F}[f])}+\underbrace{\left((2\pi)^{d/2}C_{d,\nu}\right)^{-1}C'_{d,\nu}(\mathcal{F}[f])}_{=:C^2_{d,\nu}(\mathcal{F}[f])}\right) \\
        &=h^{2\nu}\left(\frac{1}{h^{2\nu+d}}C^1_{d,\nu}(\mathcal{F}[f])+C^2_{d,\nu}(\mathcal{F}[f])\right)
    \end{align*}
    which concludes the proof.
\end{proof}

\section{Derivation of $h^*$ in Eq. (\ref{eq:optimal_h})}
\label{ap:sec:derivation_h_opt}
Taking derivative of the bound presented in Proposition \ref{lemma:bound_norm} w.r.t. $h$ yields
\begin{equation*}
    \frac{\mathrm{d}}{\mathrm{d}h}\left(h^{2\nu}\left(\frac{1}{h^{2\nu+d}}C^1_{d,\nu}(\mathcal{F}[f])+C^2_{d,\nu}(\mathcal{F}[f])\right)\right)=\frac{1}{h^{d+1}}\left(2\nu C^2_{d,\nu}(\mathcal{F}[f])h^{2\nu+d}-d C^1_{d,\nu}(\mathcal{F}[f])\right).
\end{equation*}
Since $d,h>0$, the leading factor $1/h^{d+1}$ never becomes zero, so we must have
\begin{equation*}
    2\nu C^2_{d,\nu}(\mathcal{F}[f])h^{2\nu+d}-d C^1_{d,\nu}(\mathcal{F}[f])=0\iff h^*=\left(\frac{d}{2\nu}\frac{C^1_{d,\nu}(\mathcal{F}[f])}{C^2_{d,\nu}(\mathcal{F}[f])}\right)^{1/(2\nu+d)}
\end{equation*}
which shows what we have stated in Eq. (\ref{eq:optimal_h}) of the main paper.

\section{Proof of Theorem \ref{thm:matern_siren_ntk}}
\label{ap:sec:proof_matern_siren_ntk}
The proof of Theorem \ref{thm:matern_siren_ntk} is based on Bochner's theorem (in harmonic analysis) which reads:

\begin{theorem}[Bochner]
A continuous function $k:\mathbb{R}^d\longrightarrow\mathbb{R}$ with $k(0)=1$ is positive definite if and only if there exists a finite positive Borel measure $\mu$ on $\mathbb{R}^d$ such that $$k(\tau)=\int_{\mathbb{R}^d}e^{i\omega^\top\tau}\mathrm{d}\mu(\omega).$$
\end{theorem}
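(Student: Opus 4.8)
The plan is to prove the two implications separately; the reverse (``if'') direction is immediate, and the forward (``only if'') direction carries all the work. For the reverse direction, suppose $k(\tau)=\int_{\mathbb{R}^d}e^{i\omega^\top\tau}\,\mathrm{d}\mu(\omega)$ for some finite positive Borel measure $\mu$. Continuity of $k$ follows from dominated convergence, the integrand being bounded by $1$ with $\mu$ finite; evaluating at $\tau=0$ gives $k(0)=\mu(\mathbb{R}^d)$, so the normalization $k(0)=1$ is equivalent to $\mu$ being a probability measure. Positive definiteness is the computation
\[
\sum_{j,\ell} c_j\overline{c_\ell}\,k(\tau_j-\tau_\ell)=\int_{\mathbb{R}^d}\Bigl|\sum_j c_j e^{i\omega^\top\tau_j}\Bigr|^2\,\mathrm{d}\mu(\omega)\ge 0,
\]
valid for every finite set of nodes $\{\tau_j\}\subset\mathbb{R}^d$ and coefficients $\{c_j\}\subset\mathbb{C}$.

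For the converse, assume $k$ continuous, positive definite, and $k(0)=1$. The idea is to regularize by a Gaussian: for each $\lambda>0$ define
\[
p_\lambda(\omega)=(2\pi)^{-d}\int_{\mathbb{R}^d}k(u)\,e^{-i\omega^\top u}\,e^{-\lambda\Vert u\Vert^2/4}\,\mathrm{d}u,
\]
which converges absolutely since $|k|\le k(0)=1$ and the Gaussian is integrable. The first step is to show $p_\lambda\ge 0$: after the change of variables $u=x-y$, $v=x+y$ one checks that $p_\lambda(\omega)$ equals a strictly positive constant times $\iint k(x-y)\,f(x)\,\overline{f(y)}\,\mathrm{d}x\,\mathrm{d}y$ with $f(x)=e^{-i\omega^\top x}e^{-\lambda\Vert x\Vert^2/2}$, and this double integral is a limit of Riemann sums $\sum_{j,\ell}c_j\overline{c_\ell}\,k(x_j-x_\ell)$ (with $c_j$ proportional to $f(x_j)$), each of which is $\ge 0$ by positive definiteness, the passage to the limit being legitimate by continuity of $k$. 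The second step is to compute the total mass: by Fubini, $\int_{\mathbb{R}^d}p_\lambda(\omega)e^{-\delta\Vert\omega\Vert^2}\,\mathrm{d}\omega$ is $k$ integrated against an approximate identity of total mass $1$, hence tends to $k(0)=1$ as $\delta\downarrow 0$, and since $p_\lambda\ge 0$ monotone convergence gives $\int p_\lambda=1$. Thus $\mu_\lambda(\mathrm{d}\omega):=p_\lambda(\omega)\,\mathrm{d}\omega$ is a probability measure, and — both $k(\cdot)e^{-\lambda\Vert\cdot\Vert^2/4}$ and $p_\lambda$ now lying in $L^1$ and continuous — Fourier inversion yields $k(\tau)e^{-\lambda\Vert\tau\Vert^2/4}=\int_{\mathbb{R}^d}e^{i\omega^\top\tau}\,\mathrm{d}\mu_\lambda(\omega)$ for all $\tau$.

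It remains to let $\lambda\downarrow 0$. Restricting to $\lambda\in(0,1]$, the plan is to show $\{\mu_\lambda\}$ is tight: the standard tail estimate bounds $\mu_\lambda(\{\Vert\omega\Vert\ge R\})$ by a constant times the average of $1-\mathrm{Re}\bigl(k(\tau)e^{-\lambda\Vert\tau\Vert^2/4}\bigr)$ over a ball of radius $\sim 1/R$ about the origin, and since $k$ is continuous at $0$ with $k(0)=1$ and $0\le 1-e^{-\lambda\Vert\tau\Vert^2/4}\le\frac14\Vert\tau\Vert^2$ for $\lambda\le 1$, this average is small for large $R$ uniformly in $\lambda$. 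By Prokhorov's theorem (Helly's selection principle) some subsequence $\mu_{\lambda_n}$ converges weakly to a probability measure $\mu$, with no mass escaping to infinity thanks to tightness. Passing to the limit in $k(\tau)e^{-\lambda_n\Vert\tau\Vert^2/4}=\int e^{i\omega^\top\tau}\,\mathrm{d}\mu_{\lambda_n}(\omega)$ — the left side tends to $k(\tau)$, the right side to $\int e^{i\omega^\top\tau}\,\mathrm{d}\mu(\omega)$ because $\omega\mapsto e^{i\omega^\top\tau}$ is bounded and continuous — gives $k(\tau)=\int_{\mathbb{R}^d}e^{i\omega^\top\tau}\,\mathrm{d}\mu(\omega)$; as $k$ is real one may symmetrize $\mu$ if desired. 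The main obstacle is exactly this limiting step: the nonnegativity of $p_\lambda$ is a routine Riemann-sum argument, but producing a finite limiting measure requires the tightness estimate, and it is precisely there that the continuity hypothesis on $k$ at the origin is indispensable — drop it and the mass of the $\mu_\lambda$ could run off to infinity, leaving no measure to represent $k$.
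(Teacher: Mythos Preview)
The paper does not supply a proof of Bochner's theorem at all: it is quoted as a classical result from harmonic analysis and then immediately \emph{applied} (together with the random-feature identity of Rahimi--Recht) to establish Theorem~\ref{thm:matern_siren_ntk}. So there is no ``paper's own proof'' to compare against; the theorem functions purely as a cited prerequisite.

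That said, your argument is a correct and standard proof of the theorem. The reverse direction is handled cleanly, and for the forward direction you follow the classical Gaussian-regularization route: smooth $k$ so its Fourier transform $p_\lambda$ exists and is integrable, verify $p_\lambda\ge 0$ via Riemann-sum approximation of the positive-definiteness inequality, check total mass by an approximate-identity computation, and then pass to the limit $\lambda\downarrow 0$ using a uniform tightness bound and Prokhorov. You also correctly identify where the continuity hypothesis is genuinely needed (the tightness step). The only cosmetic remark is that the change of variables $u=x-y$, $v=x+y$ in the nonnegativity step deserves one more line to make the Jacobian and the resulting Gaussian factor explicit, but the idea is right and the conclusion is sound.
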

In essence, Bochner's theorem states that $k$ and $\mu$ are \textit{Fourier duals}; given a stationary kernel $k$, one can obtain $\mu$ (better known as \textit{spectral density} if normalized) by applying the \textit{inverse} Fourier transform to $k$.
On the other hand, given a spectral density $\mu$, one can obtain the corresponding kernel function $k$ by applying the Fourier transform to $\mu$.
Based on Bochner's theorem, 
\begin{equation*}
    k(\tau)=k(x-y)=\int_{\mathbb{R}^d}e^{i\omega^\top(x-y)}\mathrm{d}\mu(\omega)=\E_{\omega\sim\mu}\left[\cos(\omega^\top(x-y))\right]
\end{equation*}
because $k$ is real and for \textit{even} spectral densities.
On the other hand, \cite{rahimi2007} showed 
\begin{align*}
    k(\tau)=k(x-y)&=\E_{\omega\sim\mu}[\cos(\omega^\top(x-y))] \\
    &=\E_{\omega\sim\mu}[\cos(\omega^\top(x-y))]+\underbrace{\E_{\omega\sim\mu,b\sim\mathcal{U}(0,2\pi)}[\cos(\omega^\top(x+y)+2b)]}_{=0\,(*)} \\
    &=\E_{\omega\sim\mu,b\sim\mathcal{U}(0,2\pi)}[\cos(\omega^\top(x-y))+\cos(\omega^\top(x+y)+2b)] \\
    &=\E_{\omega\sim\mu,b\sim\mathcal{U}(0,2\pi)}[2\cos(\omega^\top x+b)\cos(\omega^\top y+b)],
\end{align*}
where $(*)$ follows because $b$ is uniformly distributed in $[0,2\pi]$ (hence, the inner expectation w.r.t. $b$ becomes zero).

We will now prove Theorem \ref{thm:matern_siren_ntk}, copied below for convenience:

\thmMaternSirenNtk*

\begin{proof}
    Let $f:\R^d\longrightarrow\R$ be a two-layer fully-connected network with sine activation function, \textit{i.e.},
    \begin{equation*}
        f(x)=\sqrt{\frac{2}{m}}\sum_{i=1}^m v_i\sin(w_i^\top x+b_i)
    \end{equation*}
    with bottom layer weights $W=(w_1,w_2,\dots,w_m)\in\R^{d\times m}$, biases $b=(b_1,b_2,\dots,b_m)\in\R^m$, and top-layer weights $v=(v_1,v_2,\dots,v_m)\in\R^m$.
    Assume that $W$ and $b$ are \textit{fixed} (that is, we only allow the second layer to be trained), and initialized according to some distribution.
    Choose
    \begin{equation}
        w_i\sim p_\nu(\omega)=h^{-2\nu}C_{d,\nu}\left(\frac{2\nu}{h^2}+(2\pi\Vert\omega\Vert)^2\right)^{-(\nu+d/2)}
        \label{eq:spectral_density}
    \end{equation}
    for $h,\nu>0$ and $b_i\sim\mathcal{U}(0,2\pi)$ for all $i\in\{1,2,\dots,n\}$.
    Then, based on the fact that $\partial_{v_i}f(x)=\sqrt{2}\sin(\omega_i^\top x+b_i)/\sqrt{m}$, it is easy to see that the NTK of $f$ is given by
    \begin{align*}
        k_{\text{NTK}}(x,y)&=\sum_{i=1}^m\partial_{v_i}f(x)\partial_{v_i}f(y) \\
        &=\frac{1}{m}\sum_{i=1}^m 2\sin(w_i^\top x+b_i)\sin(w_i^\top y+b_i) \\
        &=\frac{1}{m}\sum_{i=1}^m 2\cos(w_i^\top x+b_i)\cos(w_i^\top y+b_i)-2\cos(w_i^\top (x+y)+2b_i) \\
        &=\frac{1}{m}\sum_{i=1}^m 2\cos(w_i^\top x+b_i)\cos(w_i^\top y+b_i)-\frac{1}{m}\sum_{i=1}^m 2\cos(w_i^\top(x+y)+2b_i).
    \end{align*}
    As $m\rightarrow\infty$, we finally obtain
    \begin{align*}
        k_{\text{NTK}}(x,y)&=\E_{(w,b)}\left[2\cos(w^\top x+b)\cos(w^\top y+b)\right]-2\underbrace{\E_{(w,b)}\left[\cos(w^\top(x+y)+2b)\right]}_{=0\,(*)} \\
        \overset{(**)}&{=}k_\nu(x,y).
    \end{align*}
    Here, $(*)$ follows because $b$ is uniformly distributed, and $(**)$ since $w$ is distributed according to $p_\nu$ which is, in fact, the \textit{spectral density} of $k_\nu$, see, \textit{e.g.}, Eq. (10) in \cite{kanagawa2018}.
\end{proof}

In fact, Theorem \ref{thm:matern_siren_ntk} can be generalized in that the NTK of a \textsc{Siren} is generally any stationary kernel:

\begin{remark}
The NTK of a \textsc{Siren} can be associated with any \textit{stationary} kernel if the bottom layer weights in Eq. (\ref{eq:spectral_density}) are initialized according to the corresponding kernel's spectral density.
\end{remark}

This is a direct consequence of Bochner's theorem.

\section{Proof of Corollary \ref{cor:rkhs_matern_arc-cosine}}
\label{ap:sec:proof_rkhs_matern_arc-cosine}
We first re-state the following result, needed to prove the second part of Corollary \ref{cor:rkhs_matern_arc-cosine}:

\begin{theorem}[\cite{saitoh2016}, Theorem 2.17]\label{thm:pd_kernels_rkhs}
    Let $k_1$ and $k_2$ be two positive definite kernels. Denote by $H_1,H_2$ the RKHSs induced by $k_1$ and $k_2$. Then, $$H_1\subset H_2\iff\gamma^2k_2-k_1\text{ is positive definite for }\gamma>0.$$
\end{theorem}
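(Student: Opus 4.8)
The plan is to prove both implications of $H_1\subseteq H_2\iff \exists\,\gamma>0:\ \gamma^2k_2-k_1\text{ is positive definite}$, relying on two standard facts from RKHS theory: the \emph{sum-of-kernels} lemma, which says the RKHS of $k_a+k_b$ is the sum space $H_a+H_b=\{f_a+f_b:f_a\in H_a,\,f_b\in H_b\}$ with norm the infimum of $\|f_a\|_{H_a}^2+\|f_b\|_{H_b}^2$ over such decompositions; and the fact that rescaling a kernel by a positive constant $c$ leaves the RKHS unchanged as a set of functions (only the norm changes, $\|f\|_{H_{ck}}^2=c^{-1}\|f\|_{H_k}^2$). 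Here ``positive definite'' is meant in the usual kernel sense, $\sum_{i,j}\overline{c_i}c_j\,k(x_i,x_j)\ge 0$ for every finite family.

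For the easy direction ($\Leftarrow$), I would set $k_0:=\gamma^2k_2-k_1$; by hypothesis $k_0$ is a reproducing kernel with some RKHS $H_0$, and $\gamma^2k_2=k_1+k_0$. The sum-of-kernels lemma then identifies the RKHS of $\gamma^2k_2$ with $H_1+H_0$, and taking the trivial decomposition $f=f+0$ shows every $f\in H_1$ lies in it. Since $\gamma^2>0$, that RKHS equals $H_2$ as a set of functions, so $H_1\subseteq H_2$.

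For the converse ($\Rightarrow$), suppose $H_1\subseteq H_2$. My first step would be to upgrade this set-theoretic inclusion to a \emph{bounded} embedding $\iota:H_1\hookrightarrow H_2$ via the closed graph theorem: if $f_n\to f$ in $H_1$ and $f_n\to g$ in $H_2$, continuity of the evaluation functionals in both spaces forces $f(x)=g(x)$ for all $x$, so $\iota$ has closed graph and is therefore bounded, say $\|\iota\|\le\gamma$. Next I would compute the Hilbert adjoint $\iota^\ast:H_2\to H_1$ on kernel sections: for $f\in H_1$, $\langle f,\iota^\ast k_2(\cdot,x)\rangle_{H_1}=\langle\iota f,k_2(\cdot,x)\rangle_{H_2}=f(x)=\langle f,k_1(\cdot,x)\rangle_{H_1}$, hence $\iota^\ast k_2(\cdot,x)=k_1(\cdot,x)$. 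Then, fixing a finite family $x_1,\dots,x_n$ and scalars $c_1,\dots,c_n$ and putting $g:=\sum_j c_j k_2(\cdot,x_j)\in H_2$, the reproducing properties give $\|g\|_{H_2}^2=\sum_{i,j}\overline{c_i}c_j\,k_2(x_i,x_j)$ and $\|\iota^\ast g\|_{H_1}^2=\sum_{i,j}\overline{c_i}c_j\,k_1(x_i,x_j)$, so that
\[
\sum_{i,j}\overline{c_i}c_j\big(\gamma^2k_2-k_1\big)(x_i,x_j)=\gamma^2\|g\|_{H_2}^2-\|\iota^\ast g\|_{H_1}^2=\big\langle(\gamma^2 I-\iota\iota^\ast)g,\,g\big\rangle_{H_2}\ge 0,
\]
since $\iota\iota^\ast$ is self-adjoint, positive, and has norm $\|\iota\|^2\le\gamma^2$. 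This exhibits $\gamma^2k_2-k_1$ as positive definite with the same $\gamma$ as the embedding norm, completing the equivalence.

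The routine parts are the two quoted RKHS facts and the bookkeeping with kernel sections. The one genuinely delicate point is the first step of $(\Rightarrow)$: pure set containment $H_1\subseteq H_2$ does not by itself yield the scalar $\gamma$ one needs --- its finiteness is exactly what the closed graph theorem supplies, and this is precisely where the reproducing-kernel structure (bounded point evaluations, forcing pointwise limits to agree) is indispensable. After that, the adjoint identity and the operator-positivity estimate are short. I would also note that the argument gives the sharper statement that the least admissible $\gamma$ equals $\|\iota\|$, the standard quantitative form of Saitoh's theorem.
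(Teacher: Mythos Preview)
The paper does not supply its own proof of this statement: it is quoted verbatim as an external result (\cite{saitoh2016}, Theorem~2.17) and then immediately invoked in the proof of Corollary~\ref{cor:rkhs_matern_arc-cosine}. So there is nothing in the paper to compare your argument against.

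That said, your proof is correct and is essentially the classical Aronszajn-type argument for this inclusion theorem. Both directions are handled cleanly: the sum-of-kernels lemma for $(\Leftarrow)$, and for $(\Rightarrow)$ the closed graph theorem to produce the bound $\gamma=\|\iota\|$, followed by the adjoint identity $\iota^\ast k_2(\cdot,x)=k_1(\cdot,x)$ and the operator inequality $\iota\iota^\ast\le\gamma^2 I$. Your observation that the minimal admissible $\gamma$ equals the embedding norm $\|\iota\|$ is also the sharp quantitative form in Saitoh's book. The one place to be slightly careful in writing it up is to note that $\|\iota^\ast\|=\|\iota\|$ (or equivalently $\|\iota\iota^\ast\|=\|\iota\|^2$) is what justifies the final positivity, which you do state; otherwise the argument is complete as written.
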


We are now ready to prove Corollary \ref{cor:rkhs_matern_arc-cosine} which is copied below for convenience:

\corRkhsMaternArcCosine*
 
\begin{proof}
    The first part follows from Theorem~\ref{thm:rkhs_matern_laplace} by observing that the NTK of a fully-connected $(L=2$)-layer ReLU network (with \textit{all} parameters being trained) is
    \begin{equation}
        k_{\widetilde{\text{AC}}}(x,y)=\underbrace{k_\text{AC}(x,y)}_{\text{first-order}}+x^\top y\underbrace{\frac{1}{\pi}(\pi-\theta)}_{\text{zeroth-order}}
        \label{eq:full_relu_ntk}
    \end{equation}
    see, \textit{e.g.},~\cite{chen2021}. The second part follows from Theorem \ref{thm:rkhs_matern_laplace}, noting that $H_{\widetilde{\text{AC}}}\subset H_{\text{1/2}}$ when restricted to $\mathbb{S}^{d-1}$; so, based on Theorem \ref{thm:pd_kernels_rkhs}, $k:=\gamma^2 k_{\text{1/2}}-k_{\widetilde{\text{AC}}}$ is a valid kernel for $\gamma>0$. Re-arranging the equation, we have
    \begin{equation}
        k_{\text{1/2}}=a k_{\widetilde{\text{AC}}} +b,\quad\text{where}\quad a=1/\gamma^2,b=k/\gamma^2.
        \label{eq:laplace_arc-cosine-like}
    \end{equation}
    Now, denote by $k_\text{AC}^{(0)}(x,y)$ the zeroth-order arc-cosine kernel as defined in Eq. (\ref{eq:full_relu_ntk}), and further set $\hat{k}_\text{AC}^{(0)}(x,y)=x^\top yk_\text{AC}^{(0)}(x,y)$. The latter is a product between two kernel functions, hence a valid, \textit{i.e.}, positive definite, kernel itself. Plugin Eq. (\ref{eq:full_relu_ntk}) into Eq. (\ref{eq:laplace_arc-cosine-like}) yields
    \begin{equation}
        \begin{split}
            k_{\text{1/2}}&=a(k_\text{AC}+\hat{k}_\text{AC}^{(0)})+b\\
            &=ak_\text{AC}+c
        \end{split}\quad\text{with}\quad c=a\hat{k}_\text{AC}^{(0)}+b,
    \end{equation}
    where $c$ is again a valid kernel function since it is a sum of two kernels.
    This shows that the Laplace and first-order arc-cosine kernel are indeed related by a constant scaling factor $a$ and the addition of a kernel function $c$ when restricted to $\mathbb{S}^{d-1}$, concluding the proof.
\end{proof}

In other words, Corollary \ref{cor:rkhs_matern_arc-cosine} states that, when restricted to the sphere, the Laplace kernel is a \textit{linear combination of two kernel functions}: the first-order arc-cosine kernel and another, valid kernel function (that accounts for the residuals).
This holds also in the other direction.

\section{Additional Details for ShapeNet Experiment}
\label{ap:sec:shapenet_non_learnable_details}
In this section, we provide implementation details and additional qualitative results on the ShapeNet experiment as presented in Section \ref{subsec:results_shapenet} of the main paper.

\subsection{Implementation Details}
\label{ap:subsec:impl_shapenet}
Following common practice (see, \textit{e.g.}, \cite{williams2022}), we use \textit{finite differences} to approximate the gradient part of the KRR problem in Eq. (\ref{eq:emp_risk}) of the main paper. 
Specifically, denote $\mathcal{X}':=\mathcal{X}^+\cup\mathcal{X}^-$, where $\mathcal{X}^+:=\{x_1+\eps n_1,x_2+\eps n_2,\dots,x_m+\eps n_m\}$ and $\mathcal{X}^-:=\{x_1-\eps n_1,x_2-\eps n_2,\dots,x_m-\eps n_m\}$ for a fixed $\eps>0$.
The Representer Theorem \citep{kimeldorf1970,schoelkopf2001} tells us that the solution to Eq. (\ref{eq:emp_risk}) is of the form 
\begin{equation*}
    f(x)=\sum_{i=1}^{2m}\alpha_i k(x,x_i')
\end{equation*}
which is linear in the coefficients $\alpha\in\mathbb{R}^{2m}$, given by
\begin{equation*}
    \alpha=(K+\lambda I)^{-1}y,\quad\text{where}\quad K=(K)_{ij}=k(x_i,x_j)\in\mathbb{R}^{2m\times 2m},
\end{equation*}
and
\begin{equation*}
    y_i=\begin{cases} 
        +\eps & \text{if }x'_i\in\mathcal{X}^+,\\
        -\eps & \text{if }x'_i\in\mathcal{X}^-,
    \end{cases}\quad\text{for }i=1,2,\dots,2m.
\end{equation*}
Here, $I\in\mathbb{R}^{2m\times 2m}$ denotes the identity matrix.
We employ PyTorch's built-in Cholesky solver to numerically stable solve for $\alpha$ and set $\eps=0.005$ for all experiments.

\subsection{Further Qualitative Results}
We show additional qualitative results on ShapeNet in Figure \ref{fig:ap_non_learnable}.

\begin{figure}
    \centering
    \includegraphics[width=\linewidth]{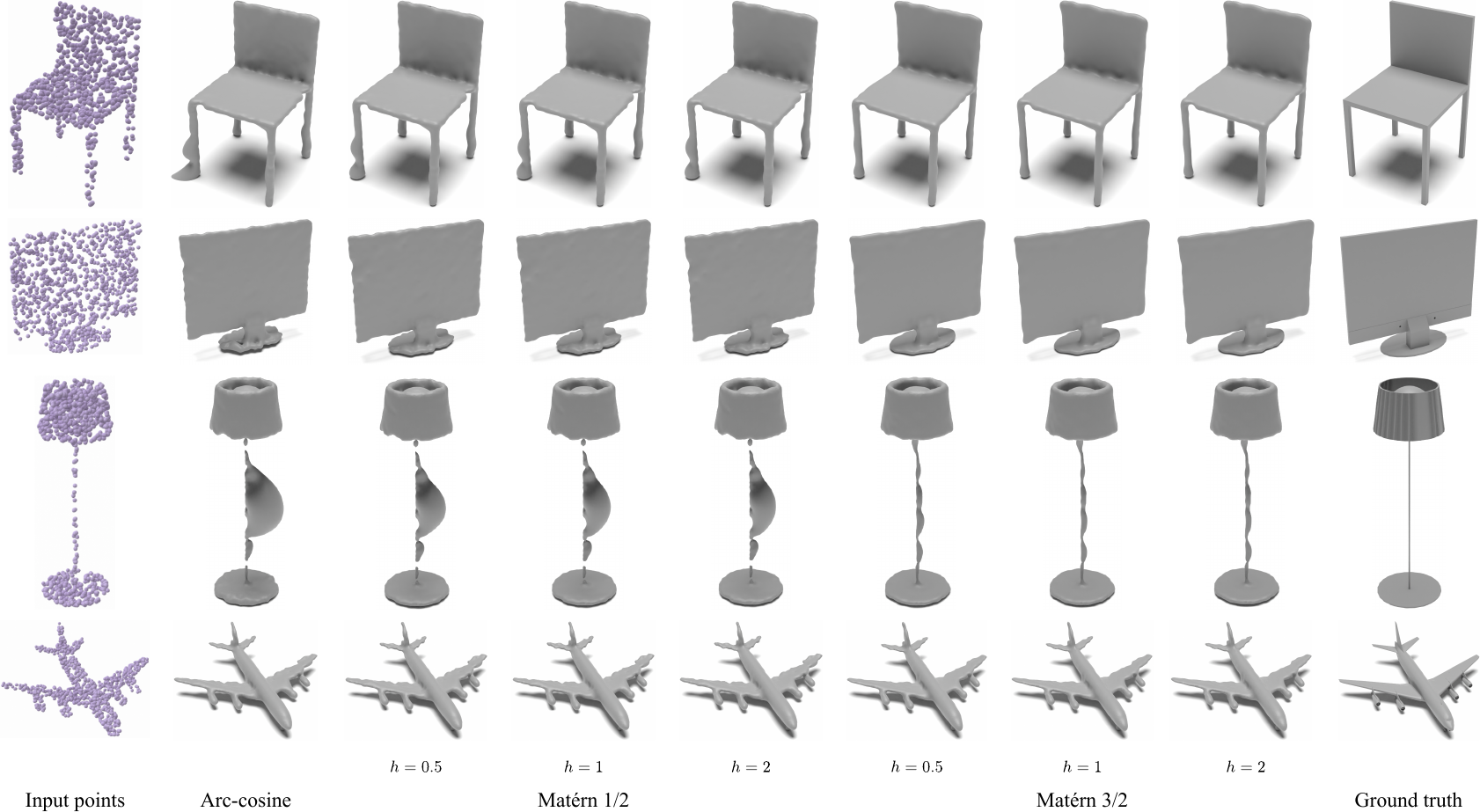}
    \caption{\textbf{Additional qualitative results on ShapeNet for non-learnable kernels}. We compare the arc-cosine kernel against \Matern 1/2 and 3/2 for different values of $h$.}
    \label{fig:ap_non_learnable}
\end{figure}

\section{Further Details on the Surface Reconstruction Benchmark}
\label{ap:sec:details_srb}
This section details the experimental setting that has been used to run \textit{Neural Splines} (NS; \citet{williams2021}) on SRB in Section \ref{subsec:results_shapenet} of the main paper.
Moreover, we present per-object metrics and additional qualitative results. 

\subsection{Experimental Setting}
\label{subsec:exp_srb}
For NS, we used the official implementation provided here: \url{https://github.com/fwilliams/neural-splines}.
Similar to the original paper, we used 15,000 Nyström samples and did the same parameter sweep over the regularization parameter, $\lambda\in\{0,10^{-13},10^{-12},10^{-11},10^{-10}\}$. 
For the rest of the parameters (not mentioned in the paper), default values provided in the repository have been used, except for the grid size which we set to 512.
We used exactly the same setting for \Matern kernels, except that we did a modest parameter sweep over the bandwidth parameter, $h\in\{0.5,1,2\}$.
We utilized the SRB data from here: \url{https://github.com/fwilliams/deep-geometric-prior}.

Despite our best efforts, we were unable to reproduce the results for the NS kernel reported in the original paper.
We run on four different GPUs, including NVIDIA RTX A2000, A40, A100, and V100 (which has been used by the authors of Neural Splines), and tested different configurations of the following parameters (see code): \texttt{grid\_size}, \texttt{eps}, \texttt{cg-max-iters}, \texttt{cg-stop-thresh}, \texttt{outer-layer-variance}.
In Table \ref{tab:results_non_learnable} of the main paper, we report the best result we could achieve on a V100, which is 0.19 for Chamfer and 3.19 for the Hausdorff distance; \cite{williams2021} reported 0.17 and 2.85 for Chamfer and Hausdorff distance, respectively. 
In order to achieve this, we had to lower \texttt{outer-layer-variance} form $10^{-3}$ per default to $10^{-5}$.

To measure runtime, we again used the same setting as in the original NS paper: 15,000 Nyström samples, and $\lambda=10^{-11}$.
Due to the absence of further information about the experimental setting in the original paper, we used default values provided in the repository for the rest of the parameters.
Runtime was measured on a single NVIDIA V100, similar to \cite{williams2021}.

\subsection{Additional Results}
Per-object metrics are shown in Table \ref{tab:per-object-srb}, and more qualitative results in Figure \ref{fig:srb-qualitative}.

\begin{table}
\begin{tabular}{lcc|cc|cc|cc|cc}
\toprule
& \multicolumn{2}{c|}{Anchor} & \multicolumn{2}{c|}{Daratech} & \multicolumn{2}{c|}{DC} & \multicolumn{2}{c|}{Gargoyle} & \multicolumn{2}{c}{Lord Quas} \\
\cmidrule(lr){2-11}
& CD $\downarrow$ & HD $\downarrow$ & CD $\downarrow$ & HD $\downarrow$ & CD $\downarrow$ & HD $\downarrow$ & CD $\downarrow$ & HD $\downarrow$ & CD $\downarrow$ & HD $\downarrow$ \\
\midrule
\Matern 1/2 & 0.31 & 6.97 & 0.25 & 6.11 & 0.18 & 2.62 & 0.19 & 5.14 & \textbf{0.12} & 1.32 \\
\Matern 3/2 & 0.25 & 5.08 & 0.23 & 4.90 & \underline{0.15} & \textbf{1.24} & \textbf{0.16} & \textbf{2.54} & \textbf{0.12} & 0.90 \\
\Matern 5/2 & 0.93 & 28.33 & 0.87 & 32.17 & 0.34 & 15.98 & 0.55 & 22.52 & 0.21 & 9.49 \\
\Matern $\infty$ & 2.88 & 29.48 & 5.80 & 45.94 & 0.35 & 35.70 & 3.84 & 30.70 & 3.13 & 24.55 \\
\midrule
NS$^*$ & 0.27 & 5.38 & 0.23 & 4.67 & \underline{0.15} & 1.41 & \underline{0.17} & 3.49 & \textbf{0.12} & 0.99 \\
\midrule
NS & \underline{0.22} & 4.65 & \textbf{0.21} & 4.35 & \textbf{0.14} & \underline{1.35} & \textbf{0.16} & \underline{3.20} & \textbf{0.12} & \textbf{0.69} \\
SPSR & 0.33 & 7.62 & 0.26 & 6.62 & 0.17 & 2.79 & 0.18 & 4.60 & \textbf{0.12} & 1.83 \\
FFN & 0.31 & \underline{4.49} & 0.34 & 5.97 & 0.20 & 2.87 & 0.22 & 5.04 & 0.35 & 3.90 \\
\textsc{Siren} & 0.32 & 8.19 & \textbf{0.21} & 4.30 & \underline{0.15} & 2.18 & \underline{0.17} & 4.64 & 0.17 & \underline{0.82} \\
SAP & 0.34 & 8.83 & \underline{0.22} & \underline{3.09} & 0.17 & 3.30 & 0.18 & 5.54 & \underline{0.13} & 3.49 \\
DiGS & 0.28 & 5.71 & \textbf{0.21} & 5.02 & \underline{0.15} & 2.13 & \textbf{0.16} & 3.81 & \textbf{0.12} & 1.10 \\
VisCo & \textbf{0.21} & \textbf{3.00} & 0.26 & 4.06 & \underline{0.15} & 2.22 & \underline{0.17} & 4.40 & \textbf{0.12} & 1.06 \\
OG-INR & 0.29 & 7.56 & 0.23 & \textbf{2.89} & 0.17 & 2.68 & 0.19 & 5.01 & \underline{0.13} & 2.14 \\
\bottomrule
\end{tabular}
\caption{\textbf{Per-object quantitative results on SRB}. "NS*" denotes the best result we could achieve by running the official implementation of NS, see Section \ref{subsec:exp_srb}. \textbf{Bold} marks best result, \underline{underline} second best.}
\label{tab:per-object-srb}
\end{table}

\begin{figure}
    \centering
    \includegraphics[width=0.7\linewidth]{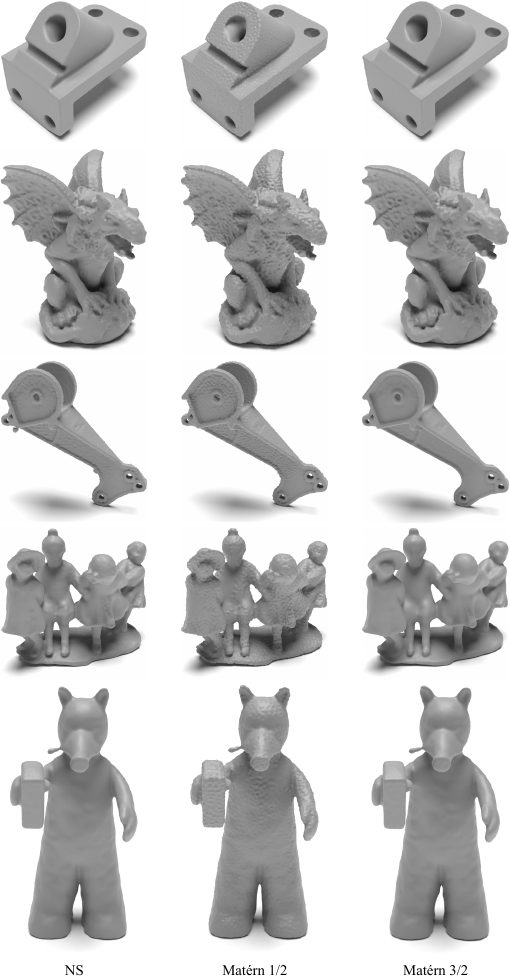}
    \caption{\textbf{Additional qualitative results on SRB for non-learnable kernels}. We compare \textit{Neural Splines} (NS; \citet{williams2021}) to \Matern 1/2 and 3/2.}
    \label{fig:srb-qualitative}
\end{figure}

\section{Additional Details For Texture Reconstruction Experiment}
\label{ap:sec:texture_recon}
This section provides implementation details and further information about the evaluation protocol used for the texture reconstruction experiment presented in Section \ref{subsec:texture_recon} of the main paper.

\subsection{Implementation Details}
Recall from Section \ref{subsec:texture_recon} of the main paper that we are given a dataset $\mathcal{D}'=\mathcal{D}\times\{c_1,c_2,\dots,c_m\}\subset\Omega\times\mathbb{R}^d\times\mathbb{R}^3$ with per-point RGB colors $\{c_1,c_2,\dots,c_m\}$.
In order to also reconstruct texture, we are seeking a function $f':\mathbb{R}^d\rightarrow\mathbb{R}^4$, predicting signed distances as well as RGB values.
Using notation introduced in Section \ref{ap:subsec:impl_shapenet}, we compute $\alpha'\in\mathbb{R}^{2m\times 4}$ as
\begin{equation*}
    \alpha'=(K+\lambda I)^{-1}y',\quad\text{where}\quad y':=[y,\Bar{c}]\in\mathbb{R}^{2m\times 4}
\end{equation*}
and $\Bar{c}:=[c, c]^\top$ with $(c_1,c_2,\dots,c_m)\in\mathbb{R}^{m\times 3}$.
Then, we predict the function $f'$ as 
\begin{equation*}
    f'(x)=\sum_{i=1}^{2m}\alpha_i' k(x,x_i'),
\end{equation*}
where $\alpha_i'\in\mathbb{R}^4$ denotes the $i$-th row of $\alpha'$.
Clearly, $f'(x)=(f_1'(x),f_2'(x),f_3'(x),f_4'(x))\in\mathbb{R}^4$ with $f'_1$ denoting the singed distance at $x$, and $f'_2,f'_3,f'_4$ are the component functions that represent RGB colors at a position $x$.

To extract a textured surface mesh from the predicted four-dimensional volume, we use a two-stage process. 
First, we apply Marching Cubes \citep{lorensen1987} to $f'_1$, and then trilinearily interpolate RGB colors at the previously predicted surface points.

\subsection{Evaluation Protocol}
Our quantitative evaluation on the \textit{Google Scanned Objects} (GSO; \cite{downs2022}) dataset closely follows \cite{mitchel2024}, in which authors compute image-based evaluation metrics on the texture atlas to quantify reconstruction ability.
Specifically, we first sample $m$ surface points along with corresponding normals and per-point RGB colors from shapes in GSO. 
Next, we solve the KRR problem in Eq. (\ref{eq:emp_risk}) as always. 
Then, instead of predicting RGB colors on the estimated geometry (which generally differs between various kernel functions), we predict (\textit{i.e.}, trilinearily interpolate) colors on the surface points of the \textit{original} mesh. 
This allows us to leverage the existing UV parametrization of the original mesh to generate a texture atlas with colors predicted from KRR.
Once the texture atlas at hand, we compare it to the original texture atlas using PSNR and LPIPS.

We used ten objects from GSO with different complexity.
The following assets have been used:
\begin{itemize}
    \item \texttt{ACE\_Coffee\_Mug\_Kristen\_16\_oz\_cup}
    \item \texttt{Animal\_Planet\_Foam\_2Headed\_Dragon}
    \item \texttt{Jansport\_School\_Backpack\_Blue\_Streak}
    \item \texttt{Now\_Designs\_Bowl\_Akita\_Black}
    \item \texttt{Predito\_LZ\_TRX\_FG\_W}
    \item \texttt{Reebok\_ZIGTECH\_SHARK\_MAYHEM360}
    \item \texttt{Razer\_Kraken\_Pro\_headset\_Full\_size\_Black}
    \item \texttt{RJ\_Rabbit\_Easter\_Basket\_Blue}
    \item \texttt{Schleich\_Lion\_Action\_Figure}
    \item \texttt{Weisshai\_Great\_White\_Shark}
\end{itemize}
They can be downloaded from: \url{https://app.gazebosim.org/dashboard}.

\section{Additional Results for Data-dependent Kernels}
\label{ap:sec:shapenet_learnable_additional_results}
We show additional qualitative results for learned kernels on ShapeNet in Figure \ref{fig:ap_learnable}.

\begin{figure}
    \centering
    \includegraphics[width=\linewidth]{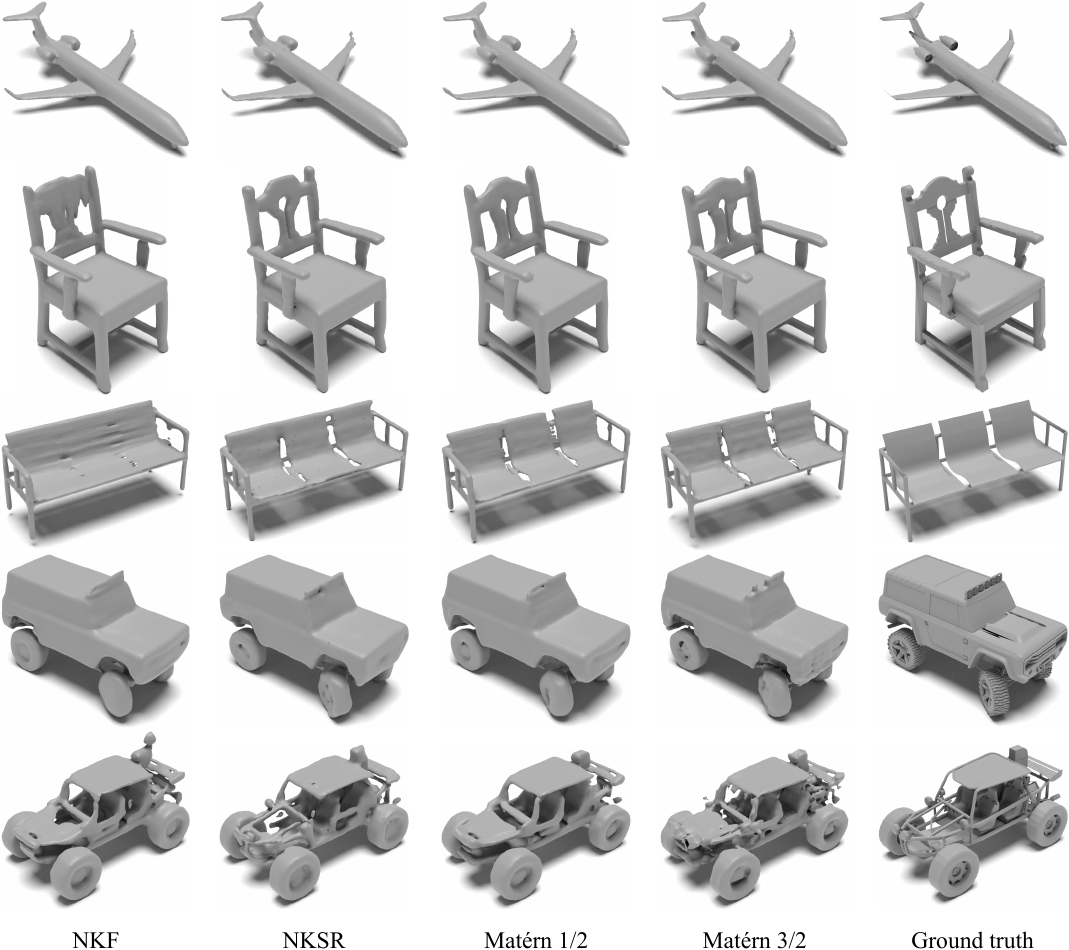}
    \caption{\textbf{Additional qualitative results on ShapeNet for learned kernels}. We compare \textit{Neural Kernel Fields} (NKFs; \citet{williams2022}) and \textit{Neural Kernel Surface Reconstruction} (NKSR; \citet{huang2023}) against learnable \Matern 1/2 and 3/2.}
    \label{fig:ap_learnable}
\end{figure}

\end{document}